\theoremstyle{thmstyleone}%
\newtheorem{theorem}{Theorem}
\newtheorem{lemma}[theorem]{Lemma}
\theoremstyle{thmstyletwo}%
\theoremstyle{thmstylethree}%
\newtheorem{definition}{Definition}%
\begin{document}

\title[Article Title]{Conjunction Subspaces Test for Conformal and Selective Classification}


\author*[1]{\fnm{Zengyou} \sur{He}}\email{zyhe@dlut.edu.cn}

\author[1]{\fnm{Zerun} \sur{Li}}\email{lizerun2000@163.com}

\author[1]{\fnm{Junjie} \sur{Dong}}\email{jd445@qq.com}

\author[1]{\fnm{Xinying} \sur{Liu}}\email{72317011@mail.dlut.edu.cn}

\author[1]{\fnm{Mudi} \sur{Jiang}}\email{792145962@qq.com}

\author[1]{\fnm{Lianyu} \sur{Hu}}\email{hly4ml@gmail.com}

\affil*[1]{\orgdiv{School of Software}, \orgname{Organization}, \orgaddress{\street{Tuqiang Road}, \city{Dalian}, \postcode{116620}, \state{Liaoning Province}, \country{China}}}

%


\abstract{In this paper, we present a new classifier, which integrates significance testing results over different random subspaces to yield consensus \textit{p}-values for quantifying the uncertainty of classification decision. The null hypothesis is that the test sample has no association with the target class on a randomly chosen subspace, and hence the classification problem can be formulated as a problem of testing for the conjunction of hypotheses. The proposed classifier can be easily deployed for the purpose of conformal prediction and selective classification with reject and refine options by simply thresholding the consensus \textit{p}-values. The theoretical analysis on the generalization error bound of the proposed classifier is provided and empirical studies on real data sets are conducted as well to demonstrate its effectiveness.}

\keywords{Hypothesis testing, conformal prediction, selective classification, meta analysis, random subspace ensemble}



\maketitle

\section{Introduction}\label{sec1}
Classification is one of the most important tasks in statistics, machine learning and data mining. To date, thousands of classification algorithms (classifiers) have been developed, ranging from simple lazy classifiers such as \textit{k}-nearest neighbor (kNN) \cite{cover1967nearest} to more complex ensemble methods like random forest \cite{breiman2001random}. Despite of the existence of so many classifiers, no single classification algorithm can always beat other algorithms in terms of the classification accuracy \cite{fernandez2014we}. 

Although the classification accuracy is probably one of the most important metrics that we are trying to optimize in practice, other capabilities such as model interpretability, prediction uncertainty quantification and prediction quality control are very vital in diverse applications as well. Therefore, new classifiers which can provide some distinct characteristics that remain unexplored by existing classification methods still need to be developed.

\subsection{Research Motivation}
In this paper, we bridge four domains and introduce a versatile classification algorithm that can alleviate those limitations in corresponding domains and provide several desirable functionalities which are critical to modern applications in an integrated manner. More precisely, our new classifier is closely related to research efforts in the following domains.

(1) \textbf{Classification via significance testing}. The development of classifiers that are capable of solving the classification problem from a significance testing aspect is an overlooked direction in the field of classification.  The linkage between classification and statistical hypothesis testing was first discussed by \cite{liao2007test}. More recently, several testing-based classification methods have been presented by \cite{ghimire2012classification,guo2019interpoint,he2021instance}. However, all these existing testing-based classifiers are instance-based classification methods. That is, these classifiers are lazy learning methods without summarizing the training data to construct a predictive model. Furthermore, the classification accuracy of these methods is still not comparable to those main-stream classifiers such as random forests. Thus, how to construct a non-lazy testing-based classification method with good performance is still an open issue.

(2)	\textbf{Conformal prediction}. Conformal prediction (CP) is a general framework that can be employed to control the error rate of any classifier \cite{shafer2008tutorial}. The basic idea is to predict a set of labels that are guaranteed to include the true label with a high probability. To achieve this objective, a nonconformity measure is utilized to perform a randomness test for each test sample. Based on the given training set, a \textit{p}-value for each possible label is obtained under the null hypothesis that the test sample with its predicted label belongs to the same distribution as the training samples \cite{cherubin2019majority}. To construct the set of predicted labels, the \textit{p}-value for each label is compared to a user-specified threshold. Despite of the success of various CP algorithms and systems, it inherently obtains the \textit{p}-values through a pos-processing procedure on the prediction results of existing classifiers. That is, the significance testing issue and the calculation of \textit{p}-value are not an integral part of the underlying classification algorithm. In addition, the obtained \textit{p}-value is an empirical one, whose precision and uncertainty are highly dependent on the number of training samples. Hence, it would be plausible to have a classification algorithm that can cast the classification problem as a hypothesis testing issue in a straightforward manner and yield an analytical \textit{p}-value for each possible label.

(3)	\textbf{Selective classification}. The term “selective classification” has been defined by \cite{el2010foundations}, which refers to “classification with a rejection option”. Here we expand its meaning to include “a refine option”. That is, selective classification in this paper refers to “classification with both rejection and refine options”. In this scenario, a regular classifier is extended to include the following two additional opinions: one test sample will not be assigned to any class when we resort to a rejection opinion and one test sample can assigned to multiple classes rather than just one class with a refine opinion. These two options are critical to many real applications in which a misclassification is disastrous and too severe to bear \cite{zhang2018reject}. Classifiers that are equipped with one of such two options \cite{chzhen2021set,hendrickx2021machine} or both  (e.g. \cite{zhang2018reject,guan2022prediction}) have been extensively investigated in statistics and machine learning. Most of these existing solutions either adopt the CP framework (e.g. \cite{guan2022prediction}) or need to develop very complicated algorithms to incorporate these additional options. Hence, a classifier with an innate ability to provide these options is highly demanded.

(4)	\textbf{Random projection ensemble classification}. Ensemble classification is a general framework in which multiple weak learners are combined to achieve more stable and accurate classification results. Recently, one particular ensemble classification strategy has received much attention, in which each base classifier is trained on a projection (or subspace) optimally selected from a set of random projections \cite{cannings2017random,tian2021rase}. Similar to other popular ensemble classification approaches, the final classification decision is mainly based on majority vote in these methods. As a result, it is a non-trivial task to employ such random projection ensemble classifiers to carry out conformal and selective prediction. 

\subsection{Method Outline and Contributions}
In this paper, we present a new classifier based on random subspace ensemble and hypothesis testing, which works as follows. First of all, we repeatedly generate a set of random feature subspaces and choose the best one from this set according to a selection criterion. Secondly, in each subspace obtained in the previous step, we construct a significance-based classifier in which the classification problem is formulated as a statistical association testing issue. The null hypothesis is that the test sample has no association with the target class and hence a smaller \textit{p}-value would indicate that the test sample is likely to belong to the corresponding class. Thirdly, we merge \textit{p}-values on all subspaces via meta-analysis \cite{borenstein2021introduction} to obtain a consensus \textit{p}-value for each class. Here we employ the rOP (\textbf{r}-th \textbf{o}rdered \textit{\textbf{p}}-value) method \cite{song2014hypothesis} to fulfill the \textit{p}-value combination task and select a best “r” during the training phase. The final consensus \textit{p}-value for each class can be utilized to facilitate the classification decision such that class labels with smaller \textit{p}-values are preferred to those labels with larger \textit{p}-values.

The new classifier presented in this paper is named as COST (\textbf{Co}njunction \textbf{S}ubspaces \textbf{T}est), which has the following appealing features. Firstly, it is a testing-based classifier so that it can provide \textit{p}-values for prediction uncertainty quantification in a natural manner. We don’t need to post-process the prediction results of third-part classification algorithms. Secondly, thanks to the good prediction performance of the random subspace ensemble classification framework, it can achieve very good classification accuracy that is comparable to off-the-shelf classifiers such as random forests even though its base learner is only built on statistical association testing. Finally, it can be easily deployed for the purpose of selective classification by setting a significance threshold. That is, we resort to a reject option for the test sample if its \textit{p}-values of all classes exceed the significance threshold. Similarly, the refine option would be adopted if the \textit{p}-values of more than one class are less than the significance threshold.

Overall, the main contributions of this paper can be summarized as follows:
\begin{itemize}
	\item From the testing-based classification aspect, our COST algorithm is the first non-lazy classifier towards this direction. More importantly, it is able to achieve comparable classification accuracy to those state-of-the-art classifiers.
	\item From the conformal prediction aspect, our algorithm provides an alternative way to obtain \textit{p}-values for uncertainty qualification without the need to post-process the outputs of existing classifiers. 
	\item From the selective classification aspect, the reject and refine option can be included as the by-product of our algorithm in a natural and elegant manner.
	\item From the random subspace ensemble classification aspect, both the base classifier construction and the ensemble strategy are not investigated in previous studies.
\end{itemize}

The remaining parts of this paper are organized as follows. In Section \ref{Related-work}, classification methods that are closely related to our algorithm are discussed. In Section \ref{Method} and Section \ref{theoretical-analysis}, the details of presented COST algorithm are described. In Section \ref{experiments}, empirical results on real data sets are presented. In Section \ref{conclusion}, some discussions and conclusions are given.

\section{Related Work}\label{Related-work}

\subsection{Classification via Hypothesis Testing}
Binary classification and hypothesis testing are largely regarded as two separate research topics \cite{li2020statistical}. As a result, their connection and differences have been rarely discussed.  \cite{li2020statistical} summarized and compared these topics for the broad scientific community. 

\cite{liao2007test} proposed a classification algorithm based on hypothesis testing in the two-class setting. If the test sample is placed into the wrong class, then the difference between the two classes will be blurred. Based on this idea, two tests with respect to the equality of two means are conducted. In each significance test, the test sample with an unknown class label is assumed to belong to one of the two classes. Accordingly, we will obtain two \textit{p}-values and the test sample is assigned to the class that has a smaller \textit{p}-value. \cite{ghimire2012classification} further extended the method of \cite{liao2007test} by introducing a minimum distance into the classifier for image pixels.

\cite{modarres2014interpoint,modarres2016multivariate,modarres2018multinomial} investigated the properties of squared Euclidean interpoint distances (IPDs) among different samples taken from multivariate Bernoulli, multivariate Poisson and multinomial distributions. Afterwards, a new testing-based classifier is proposed based on the IPDs among different samples by \cite{guo2019interpoint}.

\cite{he2021instance} formulated the binary classification problem as a two-sample testing problem. More precisely, the method first calculates the distance between the test sample and each training sample to derive two distance sets. Then, the two-sample testing method called Wilcoxon-Mann-Whitney test is performed under the null hypothesis that the two sets of distances are drawn from the same cumulative distribution. At last, two \textit{p}-values are generated and the test sample is assigned to the class associated with the smaller \textit{p}-value.

Overall, these existing classification algorithms based on significance testing still deserve certain limitations. Firstly, they are instance-based classification algorithms without training a concise predictive model. Secondly, the classification accuracies of these methods are still far from satisfactory in comparison with those state-of-the-art classifiers. Finally, these methods are time-consuming since the pairwise distance between the test sample and each training sample is generally required.

\subsection{Conformal Prediction}

The concept of conformal prediction was first introduced by \cite{vovk2005algorithmic}. Thereafter, extensive research efforts have been pursued under this framework and significant advances have been made. Due to the vast literature during the past decades, please refer to a recent introduction \cite{toccaceli2022introduction} on this topic to get a global view on recent advances. Here we only focus on the key difference between the CP method and our method and discuss those ensemble CP methods. 

There are at least two critical differences between the CP method and our algorithm. First, CP operates on top of virtually existing classification methods to report empirical \textit{p}-values for subsequent analysis. In contrast, our method inherently utilizes the significance testing procedure as an integral component to generate analytical \textit{p}-values as outputs. Second, the null hypotheses are different, i.e., the null hypothesis in CP is that the test sample belongs to the same distribution as the training samples of a candidate class while the null hypothesis in our method is that the test sample has no association with the candidate class.

Very recently, the combination of multiple conformal predictors has received much attention \cite{toccaceli2017combination,toccaceli2019conformal,toccaceli2019combination,linusson2020efficient}. These methods are closely related to our algorithm since the integration of multiple \textit{p}-values via meta-analysis is typically adopted in some methods as well. Anyway, none of these methods has employed the rOP approach in the ensemble stage. In addition, the selection of an optimal parameter “\textit{r}” during the training phase to obtain better classification accuracy is another merit of our algorithm. 

\subsection{Selective Classification}

Both classifiers with a reject option \cite{Magesh2023JMLR,hendrickx2021machine,el2010foundations,yuan2010classification} and set-valued classifiers (classifier with a refine option) \cite{chzhen2021set,wang2022set,bates2021distribution} have been extensively studied during the past decades. See \cite{hendrickx2021machine} and \cite{chzhen2021set} for an overview on the advances of these two topics. Meanwhile, we have witnessed an increased interest on developing selective classifiers in which both reject option and refine option are equipped \cite{zhou2023JMLR,zhang2018reject,guan2022prediction}.

From the selective classification perspective, our method also provides both reject option and refine option in the same classifier. Note that CP methods can achieve this objective in a similar manner. The main difference between our method and CP methods have been discussed in Section 2.2. Here we highlight the fact that the CP methods typically will not resort to the reject option since it has to guarantee the probability of including the true label in the prediction set. In the experimental results, we will further discuss this potential limitation of the CP method. Compared to those non-CP methods such as \cite{zhang2018reject}, our method is able to provide both options solely based on comparing \textit{p}-values with a given significance threshold. Moreover, our method is quite simple and easy-to-understand in comparison to those selective classification algorithms without adopting the CP framework.

\subsection{Random Projection Ensemble Classification}

The random projection ensemble classification method combines classification results of from base classifiers on random projections of original feature vectors \cite{cannings2017random}. From the dimension reduction viewpoint, the random subspace ensemble classifier \cite{ho1998random,tian2021rase,huynh2023optimizing} can be regarded as a special case of the random projection ensemble classifier. 

Our algorithm is one special type of random subspace ensemble classifier as well. However, there are at least two key differences between our method and existing classifiers \cite{tian2021rase,huynh2023optimizing} with respect to the base learner and the ensemble strategy. In general, any existing classifier can be employed as the base learner on each subspace, our method utilizes a quite simple classifier based on association testing that has never been discussed before. In addition, due to the special characteristics of testing-based base learners in our method, the \textit{p}-value combination method is used to integrate multiple classification results from different base learners. Furthermore, to achieve the objective of selective classification using existing random subspace ensemble classifiers, we may have to follow the CP framework to post-process the original prediction results.

\section{Method}\label{Method}
\subsection{Notations}
In this section, we will introduce the details of our COST algorithm. Assume that we have a training set of $n$ labeled samples, denoted as $(x_{1},y_{1}),...,(x_{n},y_{n})$, where each $x_i$ represents a training sample and $y_i$ represents its corresponding label. In addition to these training samples, we also consider a test sample $\hat{x}$ of $d$ feature values, whose true class label is denoted by $\hat{y}$. Other symbols used in the COST algorithm and their meanings are presented in Table \ref{table1}. Note that the COST algorithm is specifically developed for categorical data. To handle continuous data, a discretization process can be utilized for data transformation. 
\begin{table}[h]
	\centering
	\caption{Notations.}
	\begin{tabular}{ccccc} \toprule
		Notation         &Meaning                          &  \\ 
		\midrule
		$\alpha$                & The significance level        &  \\
		$b_1$                &Number of chosen subspaces        &  \\
        $b_2$               & Number of candidate subspaces in each round & \\
		$n$                &Number of training samples         &  \\
        $m$                 &Number of validation samples       & \\
        $l_{max}$           & The maximum size of subspace & \\
	    $F=(a_1,\dots,a_d)$                & $d$ features                    &    \\ 
		$(c_1,\dots,c_k)$                & $k$ class labels                    &  \\ 
        $(s_1,\dots,s_{b_1})$          & $b_1$ chosen subspaces & \\
        $CS=(cs_1,\dots,cs_{b_2})$     & $b_2$ candidate subspaces in each iteration & \\
        $p_{i,j},1\leq i\leq b_1,1\leq j\leq k$ & The \textit{p}-value of $s_i$ on $c_j$ & \\
        $p_{c_i},1\leq i \leq k$        & The \textit{p}-value of $\hat{x}$ on $c_i$ & \\
        $D_t=\{(x_i,y_i)  | 1 \leq i \leq n\}$     & Training data      & \\
        $D_v=\{(x_i,y_i)  | 1 \leq i \leq m\}$      & Validation data   & \\
		$(\hat{x},\hat{y})$ & Test sample with its true label              &  \\
		\bottomrule
	\end{tabular}
	\label{table1}
\end{table}

\subsection{An Overview of COST}
An overview of the COST algorithm is shown in Figure \ref{figure1}. As shown in Figure \ref{figure1}: COST is composed of three key steps: subspace selection, \textit{p}-value calculation and \textit{p}-value combination. 

\begin{itemize}
	\item \textbf{Subspace selection}. Within the COST framework, each iteration yields $b_2$ random subspaces. Then, among these $b_{2}$ subspaces, one subspace with the highest relative risk is identified and selected. After repeating this procedure $b_1$ times, $b_{1}$ subspaces are chosen for training the classifier.
	\item \textbf{\textit{P}-value calculation}. With respect to each chosen subspace, the null hypothesis is that the test sample has no association with each class. The alternative hypothesis is that the test sample is positively associated with one target class in the sense that the projection of feature values of the test sample on this subspace is over-expressed in the target class. The Fisher's exact test can be employed to tackle the above hypothesis testing issue to yield a \textit{p}-value for each class with respect to each subspace.  
	\item \textbf{\textit{P}-value combination}. We have collected multiple \textit{p}-values with respect to different subspaces for each class. Under the joint null hypothesis that the test sample has no association with the target class, we can employ the \textit{p}-value combination methods in meta-analysis to obtain a consensus \textit{p}-value. According to the consensus \textit{p}-values for different classes, we can conduct a regular classifier by assigning the test sample to the class with the smallest \textit{p}-value. To include the reject option, we can just specify a significance threshold to “refuse to classification" if the \textit{p}-values of all classes are larger than the threshold. In a similar manner, we can include the refine option by classifying the test sample to more than one class if their corresponding \textit{p}-values are no larger than the threshold. 
\end{itemize}

\begin{figure}[h]
	\centering
	\includegraphics[width=\columnwidth]{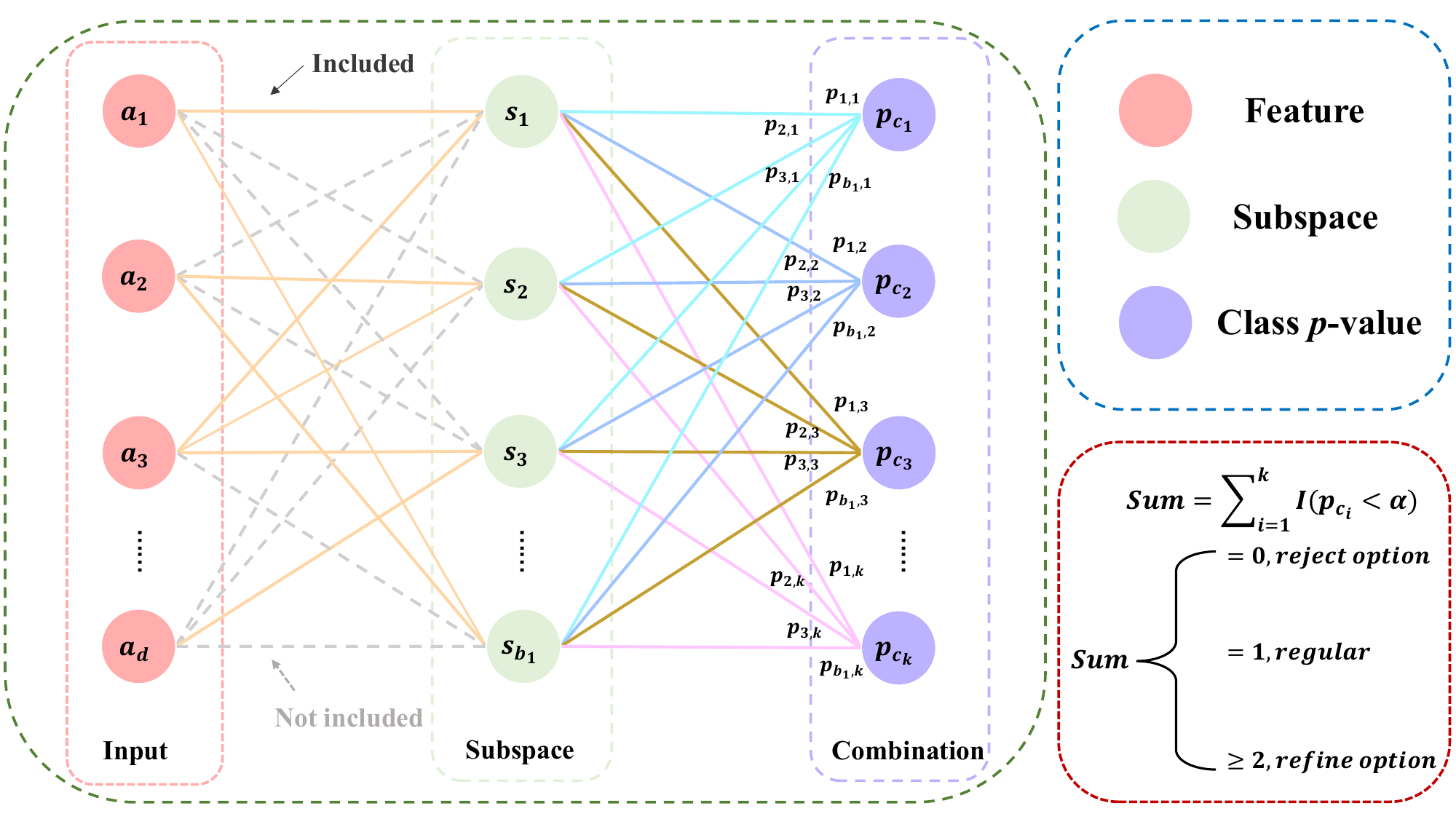}
	\caption{\textbf{An overview of COST}. Firstly, a specified number of subspaces are chosen according to some criteria. Secondly, a \textit{p}-value is calculated for each class on every subspace. Finally, the \textit{p}-value combination method is used to merge \textit{p}-values from all subspaces to obtain a consensus \textit{p}-value for each class. Through the comparison among these \textit{k} class \textit{p}-values or the introduction of a significance level,  we can fulfill the task of regular classification or selective classification. }
	\label{figure1}
\end{figure}

\subsection{Subspaces Selection}
\label{sec:subspace-selection}
In each iteration, COST randomly generates $b_2$ random feature subsets of $F=\{a_1,\dots,a_d\}$, which are denoted by $CS=\{cs_1,\dots,cs_{b_2}\}$.  Each subset in $CS$ corresponds to a respective candidate subspace. Furthermore, for any $cs \in CS $, we impose a constraint on its size, i.e.,  $1 \leq |cs| \leq l_{max}$. Here, $l_{max}$ represents the maximum size of subspaces, which is specified as $l_{max}=min(d,\lfloor \sqrt{n}\rfloor)$, where $d$ denotes the number of total features and  $n$ represents the number of training samples.

Next, we need to select one “best" subspace from these $b_2$ candidate subspaces according to some criteria. For a candidate subspace $cs$, we use $x_{l}[cs]$ to denote the projected feature value set from the \textit{l}-th training sample. The collection of distinct projected feature value sets of $n$ training samples on the subspace $cs$ is denoted by $Z$. 

For each unique feature value set $z \in Z$, the number of its occurrences in the training set can be calculated as  $o(z)=| \{x_{l} | x_{l}[cs]=z, 1\leq l\leq n\}|$. Similarly, the number of its occurrences in the training set with respect to the \textit{h}-th class is $o_{h}(z)=| \{x_{l}\ | \ x_{l}[cs]=z, y_{l}=c_{h}, 1 \leq l\leq n, 1\leq h \leq k\}|$. Accordingly, we can find a class that $z$ appears most frequently, which is denoted by $c_{t}$. Then, we can calculate the relative risk of $z$ as:
  \begin{equation}
  	\label{math:single-RR}
	RR(z)=\frac{o_{t}(z)/o(z)}{(o_{t}-o_{t}(z))/(n-o(z))},
 \end{equation}
where $o_{t}$ is number of training samples whose class label is $c_{t}$. By averaging over all relative risk values of distinct projected feature value sets, we have:  
   \begin{equation}
   	\label{math:average RR}
 	\overline{RR} = \sum_{z \in Z}^{}RR(z)  / |Z|.
 \end{equation}
 
According to Equation (\ref{math:average RR}), we can calculate the average relative risk for all $b_2$ candidate subspaces and retain the one with the highest average relative risk. After iterating $b_1$ times, we obtain $b_1$ subspaces.
\subsection{\textit{P}-value Calculation}
\label{sec:pvalue-calculation}
In each individual subspace, we need to calculate the \textit{p}-value corresponding to each class. Suppose we are computing the \textit{p}-value $p_{i,j}$ on the subspace $s_i$ for class $c_j$. The corresponding null hypothesis in this case is that, the test sample $\hat{x}$ is hypothesized not to belong to $c_j$ according to projected training samples on the subspace $s_i$. More precisely, the null hypothesis can be further stated as: “the test sample has no association with $c_{j}$ on $s_{i}$". The COST algorithm tackles the above statistical association testing issue using the Fisher's exact test to calculate an analytical \textit{p}-value.

Similar to the notations used in Section \ref{sec:subspace-selection}, let $z_{i} = \hat{x}[s_{i}]$ be projected feature value set of $\hat{x}$ on the subspace $s_{i}$. Then, $o(z_{i})$ and $o_{j}(z_{i})$ are used to denote the number of occurrences of $z_{i}$ in all training samples and the training samples of class $c_{j}$, respectively. Meanwhile, $o_{j}$ is used to represent the number of training samples whose class label is $c_{j}$.

Then, we can construct a contingency table for two binary variables defined below. One is the class variable denoted by $C$, where $C=1$ if the class label of training samples is $c_j$ and $C=0$ otherwise. Another variable is denoted by $E$, where $E=1$ if the projected feature value set of training samples on $s_{i}$ is $z_{i}$ and $E=0$ otherwise.

\begin{table}[h]
	\centering
	\caption{The 2*2 contingency table for computing $p_{i,j}$.}
	\begin{tabular}{cccccc} \toprule
		  &     $C=1$    &$C=0$    & sum    &  \\ 
		\midrule
	  $E=1$ 	 &    $o_{j}(z_{i})$   &$o(z_{i}) - o_{j}(z_{i})$   & $o(z_i)$    &  \\ 
        $E=0$ 	 &    $o_j-o_{j}(z_{i})$   &$n-o_j-o(z_{i})+o_{j}(z_{i})$   & $n-o(z_{i})$  &  \\ 
        sum 	 &    $o_j$   &  $n-o_j$  & $n$  &  \\ 
		\bottomrule
	\end{tabular}
	\label{table:fisher}
\end{table}

Under the null hypothesis that the test sample $\hat{x}$ is not associated with the class  $c_j$ on the subspace $s_i$, the cell count $o_{j}(z_{i})$ follows a hypergeometric distribution:
\begin{equation}
	\label{math:hypergeometric}
	P(o_{j}(z_{i})|s_i)=\frac{\binom{o_j}{o_{j}(z_{i})} \binom{n-o_{j}(z_{i})}{o(z_{i})-o_{j}(z_{i})}}{\binom{n}{o(z_{i})}}.
\end{equation}
The \textit{p}-value represents the likelihood of observing results that are as extreme as, or more extreme than the results we have obtained under the assumption that the null hypothesis is true. Within the context of our study, “more extreme" refers to instances that the cell count $C = 1$ and $E = 1$ exceeds $o_{j}(z_{i})$. It means that $z_{i}$ appears more frequently in the class $c_{j}$ than expected. Therefore, the \textit{p}-value $p_{i,j}$ can be computed as the cumulative probability:
\begin{equation}
	\label{math:p-value}
	p_{i,j} = p(o_{j}(z_{i})|s_i)=\sum_{q=0}^{min(o(z_{i}) - o_{j}(z_{i}),o_j-o_{j}(z_{i}))} \ P(o_{j}(z_{i})+q|s_i),
\end{equation}
where $min(o(z_{i}) - o_{j}(z_{i}),o_j-o_{j}(z_{i}))$ denotes the maximal number of samples of class $c_{j}$ that  are able to contain $z_{i}$ beyond the exsiting $o_{j}(z_{i})$ samples.

\subsection{\textit{P}-value Combination}
\label{sec:p-value-combination}
For the test sample $\hat{x}$, we have calculated the \textit{p}-values $p_{i,j} (1\leq i\leq b_1,1\leq j\leq k$) on all subspaces for all classes in Section \ref{sec:pvalue-calculation}. In this section, we combine the \textit{p}-values $\{p_{1,j},\dots,p_{b_1,j}\}$ to obtain a consensus \textit{p}-value $p_{c_j}$ via meta-analysis.

In this paper, the meta-analysis method used for combining the \textit{p}-values is the \textit{r}OP (\textbf{\textit{r}}-th \textbf{o}rdered \textit{\textbf{p}}-value) method.  In this method, we first sort $b_{1}$ \textit{p}-values in a non-decreasing order: $p_{(1),j}\leq\dots\leq p_{(b_1),j}$ and then choose the \textit{r}-th \textit{p}-value $p_{(r),j}$ as the test statistic. According to \cite{song2014hypothesis}, the  $p_{(r),j}$ follows a Beta distribution  with parameters $ \alpha = r$ and $\beta = b_1-r+1$. Hence, we can obtain the consensus \textit{p}-value $p_{c_{j}}$ according to the cumulative distribution function of the Beta distribution.

We can assign the test sample $\hat{x}$ to the class with the minimum \textit{p}-value to conduct a regular classification:
\begin{equation}
	\label{math:regular_ans}
	\hat{y}= \underset{c_i,1 \leq i \leq k}{arg \ min} \ p_{c_i}.
\end{equation}

Within the COST framework, the selection of a proper “\textit{r}” is crucial to the final classification performance. Hence, a validation set $D_v=\{(x_i,y_i)  | 1 \leq i \leq m\}$ is utilized to choose the optimal value of “\textit{r}”. For each potential value of “\textit{r}”, the classification accuracy on $D_v$ is computed. Consequently, the “\textit{r}” that yields the highest classification accuracy is chosen as the final parameter setting to be applied to the test set.

\subsection{Selective Classification}
\label{section:selective classification}
Following the aforementioned procedures, we are able to ascertain the \textit{p}-values for the test sample $\hat{x}$ for all classes $p_{c_1},\dots,p_{c_k}$. We calculate the number of \textit{p}-values that are less than a given significance level $\alpha$:
\begin{equation}
	\label{math:indicator}
	Sum =   \sum_{i=1}^{k}\ \mathbb{I}(p_{c_i} \ < \alpha),
\end{equation}
where $\mathbb{I}$ is an indicator function and $\alpha$ is a user-specified parameter. 

Based on the value of $Sum$, we have the following classification options:
\begin{itemize}
	\item $Sum \ = \ 1$, this indicates that exactly only one \textit{p}-value can pass the threshold. The test sample $\hat{x}$ is to be assigned distinctly to the class whose associated \textit{p}-value falls below the predetermined threshold $\alpha$.
	\item $Sum\ =\ 0$, this indicates that all $k$ \textit{p}-values are no less than the threshold. Therefore, it is concluded that the test sample $\hat{x}$ does not belong to any class, and the \textbf{reject option} is opted.
	\item $Sum \ >\ 1$, this indicates that more than one \textit{p}-values are less than the threshold. Hence, the test sample $\hat{x}$ could be considered to belong to at least two classes with a \textbf{refine option}. 
\end{itemize}

\section{Theoretical Analysis}\label{theoretical-analysis}
COST is an ensemble classification algorithm based on order statistics. In this section, we will first investigate the convergence of COST under the setting of regular classification, i.e., each test sample is assigned to the class with the smallest consensus \textit{p}-value. 

Suppose $\hat{c}$ is the class label with the minimum \textit{p}-value when the optimal chosen \textit{r} is $\hat{r}$. Accordingly, the \textit{p}-value that ranks at the position $\hat{r}$ in the ordered \textit{p}-value sequence for class $\hat{c}$ is $p_{(\hat{r}),\hat{c}}$. As shown in the following Lemma, the classification decision of COST is equivalent to that of a bagging predictor.

\begin{lemma}
\label{lemma:bagging-predictor}
On the i-th subspace $(1\leq i\leq b_1)$, we can construct a threshold classifier as: $f(\hat{x},s_i)=c_{j} \ if\  \mathbb{I}(p_{i,j} \leq p_{(\hat{r}),\hat{c}})$. Then, these $b_1$ threshold classifiers can be aggregated to form a bagging predictor:
\begin{equation}
    g(\hat{x}) =  \underset{c_j,1 \leq j \leq k}{arg \ max}\sum_{i=1}^{b_1} \mathbb{I}(f(\hat{x},s_i)=c_{j}).
\end{equation}
The class label reported by $g(\hat{x})$ is equivalent to that of COST.
\end{lemma}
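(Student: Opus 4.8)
The plan is to reduce both decision rules to the same quantity---the $\hat{r}$-th smallest $p$-value of each class---and then show they agree. First I would unwind COST's rule. By construction COST reports $\hat{c} = \arg\min_{c_j} p_{c_j}$, where the consensus $p$-value is $p_{c_j} = F(p_{(\hat{r}),j})$ with $F$ the cumulative distribution function of the $\mathrm{Beta}(\hat{r}, b_1 - \hat{r} + 1)$ distribution, evaluated at the $\hat{r}$-th order statistic $p_{(\hat{r}),j}$. The crucial observation is that the parameters $\hat{r}$ and $b_1$ do not depend on the class, so the \emph{same} strictly increasing map $F$ is applied to every class. Monotonicity of $F$ then preserves the ordering, giving $\arg\min_{c_j} p_{c_j} = \arg\min_{c_j} p_{(\hat{r}),j}$. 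Hence COST's output satisfies $p_{(\hat{r}),\hat{c}} = \min_{c_j} p_{(\hat{r}),j}$; in other words, the threshold $\tau := p_{(\hat{r}),\hat{c}}$ used to define $f(\hat{x},s_i)$ is exactly the smallest $\hat{r}$-th order statistic among all classes.

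Next I would analyze the bagging vote counts $V_j := \sum_{i=1}^{b_1} \mathbb{I}(f(\hat{x},s_i)=c_j) = \sum_{i=1}^{b_1} \mathbb{I}(p_{i,j} \leq \tau)$, so that $g(\hat{x}) = \arg\max_{c_j} V_j$. The elementary but key identity connecting counts and order statistics is that, for any class $c_j$ and any threshold $t$, at least $\hat{r}$ of the values $p_{1,j},\dots,p_{b_1,j}$ fall at or below $t$ if and only if $p_{(\hat{r}),j} \leq t$; that is, $V_j \geq \hat{r} \iff p_{(\hat{r}),j} \leq \tau$. Applying this with $t = \tau$: for the winning class we have $p_{(\hat{r}),\hat{c}} = \tau$, so $V_{\hat{c}} \geq \hat{r}$; for every other class $c_j$ the strict inequality $p_{(\hat{r}),j} > \tau$ (a consequence of $\hat{c}$ being the minimizer) forces $V_j \leq \hat{r}-1$. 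Chaining these gives $V_{\hat{c}} \geq \hat{r} > V_j$ for all $j \neq \hat{c}$, so the bagging predictor also outputs $\hat{c}$, establishing the claimed equivalence.

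The step I expect to carry the real content is the second one, the order-statistic/threshold identity, although it is short once stated; the monotonicity argument in the first step is routine. The main obstacle is the treatment of ties. If two classes attain the same minimal $\hat{r}$-th order statistic, then $\arg\min_{c_j} p_{c_j}$ (COST) and $\arg\max_{c_j} V_j$ (bagging) are both set-valued, and the equivalence holds only up to a shared tie-breaking convention; I would either assume the minimizer is unique (a genericity condition that holds when the $p_{(\hat{r}),j}$ are distinct) or fix the same deterministic tie-break in both rules so that the selected label coincides. A secondary point to record is that $F$ must be genuinely strictly increasing on the relevant range, so that no ordering information is lost in passing from $p_{(\hat{r}),j}$ to $p_{c_j}$; this holds for the Beta cumulative distribution function with the stated positive parameters.
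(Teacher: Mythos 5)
Your proof is correct and follows essentially the same route as the paper's: both reduce COST's decision to the $\hat{r}$-th order statistics $p_{(\hat{r}),j}$, show the winning class $\hat{c}$ receives at least $\hat{r}$ votes from the threshold classifiers, and argue that (excluding ties) every other class receives at most $\hat{r}-1$, so the bagging vote also selects $\hat{c}$. Your write-up is in fact slightly more careful than the paper's — you make explicit the strict monotonicity of the Beta cumulative distribution function that justifies $\arg\min_j p_{c_j} = \arg\min_j p_{(\hat{r}),j}$, and you flag the tie-breaking issue that the paper dispatches with a ``without loss of generality'' (which moreover contains a sign slip: it says the other $p_{(\hat{r}),j}$ are \emph{strictly less} than $p_{(\hat{r}),\hat{c}}$ when it clearly means strictly greater).
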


\begin{proof}
According to the \textit{r}OP method and COST, we know that for the class $\hat{c}$, there are at least $\hat{r}$ \textit{p}-values that are no larger than $p_{(\hat{r}),\hat{c}}$. That is, there are at least $\hat{r}$ threshold classifiers that will vote $\hat{c}$. Meanwhile, since $p_{(\hat{r}),\hat{c}}$ is the smallest \textit{p}-value among all $p_{(\hat{r}),j}(1\leq j\leq k)$, without loss of the generality, we can assume that all other $p_{(\hat{r}),j}$ are strictly less than  $p_{(\hat{r}),\hat{c}}$. Hence, there will be at most $\hat{r}-1$ threshold classifiers that support the remaining $k-1$ classes. As a result, the bagging predictor $g(\hat{x})$ will classify $x$ to $\hat{c}$ since it will receive the largest number of votes from base classifiers.  So the lemma is proved.
\end{proof}

\subsection{The Convergence Analysis}
After establishing the equivalence between COST and the bagging predictor $g(\hat{x})$ defined in Lemma \ref{lemma:bagging-predictor} with respect to the classification result, we can follow the notation in \cite{breiman2001random} to define a marginal function as follows.

\begin{definition}
\label{definition-margin-function}
Given an ensemble classifier of $b_1$ base predictors $f(\hat{x},s_1),f(\hat{x},s_2),...,f(\hat{x},s_{b_1})$, we define the margin function of COST for the input sample $\hat{x}$ as follows:
\begin{equation}
    MF(\hat{x},\hat{c})=\frac{1}{b_1}\sum_{i=1}^{b_1}\mathbb{I}(f(\hat{x},s_i)=\hat{c})-\underset{c_j\neq \hat{c}}{max}\frac{1}{b_1}\sum_{i=1}^{b_1}\mathbb{I}(f(\hat{x},s_i)=c_j),
\end{equation}
where $\mathbb{I}$ is an indicator function.
\end{definition}

In Definition \ref{definition-margin-function}, if $\hat{c}=\hat{y}$ is supposed to the right class of test sample $\hat{x}$, then the margin function measures the extent to which the average number of votes for $\hat{c}$ exceeds the average vote of the second best class.
\begin{lemma}
\label{Lemma:converge}
    As the number of base classifiers increases $(b_1\to \infty)$ and these base classifiers are supposed to be independent, for almost surely all i.i.d random subspaces $s_{1}, s_{2},...,$ the margin function $MF(\hat{x},\hat{c})$ at $\hat{x}$ converges to
    \begin{equation}
        MF^*(\hat{x},\hat{c})=P_S(f(\hat{x},S)=\hat{c})-\underset{c_j\neq \hat{c}}{max}P_S(f(\hat{x},S)=c_j),
    \end{equation}
    where $S$ is used to denote the random subspace variable.  
\end{lemma}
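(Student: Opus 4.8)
The plan is to recognize this as a direct application of the Strong Law of Large Numbers (SLLN), mirroring Breiman's classic convergence argument for random forests. First I would fix the test sample $\hat{x}$ and observe that, because the subspaces $s_1, s_2, \ldots$ are i.i.d.\ and each base classifier $f(\hat{x}, s_i)$ is a deterministic function of its own subspace $s_i$, the sequence $f(\hat{x}, s_1), f(\hat{x}, s_2), \ldots$ consists of i.i.d.\ random variables taking values in the finite label set $\{c_1, \ldots, c_k\}$. Consequently, for any fixed class $c_j$, the indicators $\mathbb{I}(f(\hat{x}, s_i) = c_j)$ are i.i.d.\ Bernoulli variables whose common mean equals $P_S(f(\hat{x}, S) = c_j)$.

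The second step is to apply the SLLN termwise. For each fixed $c_j$, the SLLN yields
\[
\frac{1}{b_1}\sum_{i=1}^{b_1}\mathbb{I}(f(\hat{x},s_i)=c_j) \xrightarrow{\text{a.s.}} P_S(f(\hat{x},S)=c_j) \quad (b_1 \to \infty).
\]
In particular the first term of $MF(\hat{x},\hat{c})$, obtained by taking $c_j = \hat{c}$, converges almost surely to $P_S(f(\hat{x},S)=\hat{c})$. To control the maximum term I would exploit the finiteness of the label set: since there are only $k-1$ competing classes, the intersection of their $k-1$ almost-sure convergence events is again almost sure (a finite union of null sets is null), so all the empirical frequencies converge simultaneously outside a single null set. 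On that full-probability event, continuity of the finite $\max$ operator gives
\[
\max_{c_j \neq \hat{c}}\frac{1}{b_1}\sum_{i=1}^{b_1}\mathbb{I}(f(\hat{x},s_i)=c_j) \xrightarrow{\text{a.s.}} \max_{c_j \neq \hat{c}} P_S(f(\hat{x},S)=c_j).
\]
Subtracting the two limits yields $MF(\hat{x},\hat{c}) \to MF^*(\hat{x},\hat{c})$ almost surely, which is the claim.

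I expect the only genuine subtlety to lie in justifying the i.i.d.\ Bernoulli structure cleanly, since the threshold $p_{(\hat{r}),\hat{c}}$ that defines each $f(\hat{x},s_i)$ is itself data-dependent and in principle couples the base learners. I would handle this by conditioning on the training set and the chosen $\hat{r}$, treating the threshold as fixed so that each $f(\hat{x}, s_i)$ depends on $s_i$ alone; this is exactly the independence hypothesis stated in the lemma. The remaining steps are routine: passing from pointwise to simultaneous almost-sure convergence over the finite class set, and interchanging the limit with the continuous $\max$. No moment conditions or uniform-integrability arguments are needed, because every summand is a bounded indicator, so the SLLN applies immediately.
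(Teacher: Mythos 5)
Your proof is correct, but it takes a genuinely different route from the paper's. You fix $\hat{x}$, note the indicators $\mathbb{I}(f(\hat{x},s_i)=c_j)$ are i.i.d.\ bounded Bernoulli variables, apply the SLLN class by class, intersect the finitely many almost-sure events, and pass the limit through the continuous finite $\max$. The paper instead reproduces Breiman's partition argument: it observes that, for the fixed training set, the map $\hat{x}\mapsto\mathbb{I}(f(\hat{x},S)=c_j)$ can take only finitely many ``shapes'' $H_1,\dots,H_T$ (subsets of projected feature value sets), defines $\varphi(S)=t$ when $\{\hat{x}:f(\hat{x},S)=c_j\}=H_t$, applies the SLLN to the finitely many events $\{\varphi(S)=t\}$, and recovers $P_S(f(\hat{x},S)=c_j)=\sum_t P_S(\varphi(S)=t)\mathbb{I}(\hat{x}\in H_t)$. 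The difference matters for the quantifier order: your argument produces a null set that depends on $\hat{x}$, which proves the lemma exactly as stated (convergence ``at $\hat{x}$''), whereas the partition trick yields a single null set outside which convergence holds simultaneously for \emph{all} $\hat{x}$ --- the uniformity Breiman needs, and which this paper implicitly uses when it later integrates over $(X,Y)$ to get $\varepsilon\to\varepsilon^*$. In the present categorical setting your proof upgrades to that uniform statement for free, since the feature space is finite and one can union the null sets over finitely many $\hat{x}$; it would be worth saying so explicitly. Two further credits to your write-up: you handle the $\max$ step carefully (the paper silently reduces to per-class convergence), and you correctly flag that the threshold $p_{(\hat{r}),\hat{c}}$ couples the base classifiers --- indeed it depends on the realized $p$-values across all $b_1$ subspaces, not just on the training set and $\hat{r}$, so even your conditioning does not literally fix it; both your proof and the paper's ultimately lean on the lemma's blanket independence hypothesis to dismiss this.
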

\begin{proof}
It is sufficient to show that for $s_1,s_2,...,$ and for all $\hat{x}$,
\begin{equation}
    \frac{1}{b_1}\sum_{i=1}^{b_1}\mathbb{I}(f(\hat{x},s_i)=c_j) \xrightarrow{b_1 \to \infty}  P_S(f(\hat{x},S)=c_j),
\end{equation}
where $S$ is used to denote the random subspace variable.  

For a fixed training set and the subspace $S$, the set of all $\hat{x}$ such that $f(\hat{x}, S) = c_{j}$ is a subset of distinct projected feature value sets over $S$ on the training set. Since both the number of all possible feature subsets is finite and the number of all possible feature value sets is finite, for all  $f(\hat{x}, S)$, there is only a finite number $T$ of such subsets of projected feature value sets, which are denoted by $H_1, H_2,.....H_{T}$.  A function $\varphi(S)=t$ is thus defined if $\{\hat{x}:f(\hat{x},S)=c_j\}=H_t$. If the frequency of $\varphi(s_i)=t$ in the initial $b_1$ subspaces is denoted by $N_t$, then we have:
\begin{equation}
    \frac{1}{b_1}\sum_{i=1}^{b_1}\mathbb{I}(f(\hat{x},s_i)=c_j)=\frac{1}{b_1}\sum_{t}^{}N_t\mathbb{I}(\hat{x}\in H_t).
\end{equation}

According to the Law of Large Numbers, as the number of base classifiers $b_1$ increases,
\begin{equation}
    N_t=\frac{1}{b_1}\sum_{i=1}^{b_1}\mathbb{I}(\varphi(s_i)=t),
\end{equation}
converges almost surely with probability 1 to 
\begin{equation}
    E_S[\mathbb{I}(\varphi(s)=t)]=P_S(\varphi(S)=t).
\end{equation}
Thus, 
\begin{equation}
    \label{equation-converge-to-P_s}
    \frac{1}{b_1}\sum_{i=1}^{b_1}\mathbb{I}(f(\hat{x},s_i)=c_j)\rightarrow \sum_{t}^{}P_S(\varphi(S)=t)\mathbb{I}(x\in H_t).
\end{equation}

Note that the right expression on the right-hand side of Equation (\ref{equation-converge-to-P_s}) is $P_S(f(\hat{x},S)=c_j)$, then the lemma is proved.
\end{proof}

\subsection{The Upper Bound of Generalization Error}

\begin{definition}
\label{definition-GE}
The generalization error is defined as $\varepsilon = P_{X,Y}(MF(\hat{x},\hat{c})<0)$, where the subscripts $X, Y$ indicate that the probability is over the sample space $X$ and the class label space $Y$.
\end{definition}

\begin{lemma}
As the number of base classifiers increases and these classifiers are independent, the generalization error $\varepsilon$ converges to
\begin{equation}
    \varepsilon^*=P_{X,Y}(MF^*(x,c)<0).
\end{equation}
\end{lemma}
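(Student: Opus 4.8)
The plan is to derive the convergence of $\varepsilon$ to $\varepsilon^*$ directly from the almost-sure pointwise convergence of the margin function established in the preceding lemma, combined with the bounded convergence theorem. The starting point is to rewrite the generalization error as an expectation of an indicator over the test distribution:
\begin{equation}
\varepsilon = P_{X,Y}(MF(\hat{x},\hat{c})<0) = E_{X,Y}\big[\mathbb{I}(MF(\hat{x},\hat{c})<0)\big],
\end{equation}
and likewise $\varepsilon^* = E_{X,Y}[\mathbb{I}(MF^*(x,c)<0)]$. Since $\varepsilon$ depends on $b_1$ (and on the realized subspaces $s_1,\dots,s_{b_1}$) only through this integrand, it suffices to control the limiting behaviour of $\mathbb{I}(MF(\hat{x},\hat{c})<0)$ as $b_1 \to \infty$ and then integrate.

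First I would invoke the previous lemma: for almost all i.i.d.\ sequences of random subspaces $s_1,s_2,\dots$ and for every $\hat{x}$, we have $MF(\hat{x},\hat{c}) \to MF^*(\hat{x},\hat{c})$. At any point $(\hat{x},\hat{c})$ with $MF^*(\hat{x},\hat{c})\neq 0$, this numerical convergence forces the indicator to stabilise, i.e.\ $\mathbb{I}(MF(\hat{x},\hat{c})<0) \to \mathbb{I}(MF^*(\hat{x},\hat{c})<0)$, because eventually $MF(\hat{x},\hat{c})$ lies strictly on the same side of zero as its limit. Each indicator is bounded by $1$, which is trivially integrable against $P_{X,Y}$, so the bounded (dominated) convergence theorem yields
\begin{equation}
\varepsilon = E_{X,Y}\big[\mathbb{I}(MF(\hat{x},\hat{c})<0)\big] \longrightarrow E_{X,Y}\big[\mathbb{I}(MF^*(x,c)<0)\big] = \varepsilon^*,
\end{equation}
for almost all subspace sequences, provided the pointwise convergence of the integrand holds $P_{X,Y}$-almost everywhere.

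The hard part will be justifying that ``almost everywhere'' clause, namely handling the boundary set $B=\{(x,c): MF^*(x,c)=0\}$ on which the indicator need not converge: there the limiting value is $0$, while the finite-$b_1$ indicator can oscillate according to the sign of small fluctuations of $MF$ around zero. I would dispose of $B$ by exploiting the finite, categorical structure of the problem. Because the data are categorical with finitely many features and finitely many feature values, each base classifier $f(\cdot,S)$ assigns labels according to finitely many projected-pattern cells, so as $\hat{x}$ ranges over the pattern space the limiting probabilities $P_S(f(\hat{x},S)=c_j)$, and hence $MF^*(x,c)$, take only finitely many distinct values. Consequently $B$ is a finite union of pattern cells, and one either verifies that $P_{X,Y}(B)=0$ under the mild assumption that exact ties carry no probability mass, or restricts the limit statement to the complement of $B$ and notes that the contribution of $B$ is a fixed finite sum whose indicator discrepancy is controlled separately. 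Once $B$ is shown to be $P_{X,Y}$-null, the bounded convergence argument applies verbatim and the lemma follows.
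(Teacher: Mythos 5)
Your proof is correct and follows the same skeleton as the paper's — deduce convergence of $\varepsilon$ from the pointwise convergence of the margin function in the preceding lemma — but the comparison is lopsided, because the paper's proof consists of exactly that citation and nothing else: it asserts in one sentence that $MF(\hat{x},\hat{c})\to MF^*(\hat{x},\hat{c})$ implies $P_{X,Y}(MF(\hat{x},\hat{c})<0)\to P_{X,Y}(MF^*(\hat{x},\hat{c})<0)$. Everything you add is the missing analytic bridge: rewriting both error rates as $E_{X,Y}$ of indicators, noting that the indicator stabilizes wherever $MF^*\neq 0$, and invoking bounded convergence. Most importantly, you isolate the one genuine subtlety that the paper (inheriting it from Breiman's original random-forest argument) silently glosses over, namely the tie set $B=\{MF^*=0\}$ — the analogue of requiring $0$ to be a continuity point in convergence in distribution. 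This is not pedantry: on a positive-mass cell with an exact, non-degenerate tie, the vote difference against the best competing class is a mean-zero finite-variance random walk in $b_1$, which changes sign infinitely often almost surely, so $\mathbb{I}(MF(\hat{x},\hat{c})<0)$ oscillates between $0$ and $1$ and $\varepsilon$ genuinely fails to converge; the lemma as stated tacitly assumes $P_{X,Y}(B)=0$. Your finite-categorical observation correctly reduces $B$ to a finite union of pattern cells, but be aware that this alone does not force $P_{X,Y}(B)=0$ — the no-ties condition you flag is a real additional hypothesis, and your fallback of ``controlling the discrepancy on $B$ separately'' cannot work in general, precisely because of the sign-oscillation just described. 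In short, your argument buys rigor and surfaces a tacit assumption the paper never states; the paper's buys brevity at the cost of both.
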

\begin{proof}
According to Lemma \ref{Lemma:converge}, the margin function $MF(\hat{x},\hat{c})$ converges to $MF^*(\hat{x},\hat{c})$ when $b_{1}$ classifiers are independent and $b_{1}\to \infty$. Therefore, the generalization error $\varepsilon$ converges to $\varepsilon^*=P_{X,Y}(MF^*(\hat{x},\hat{c})<0)$.
\end{proof}
\begin{definition}
Similar to \cite{breiman2001random}, the strength of COST is defined as
\begin{equation}
    \mathbb{S}=E_{X,Y}MF^*(\hat{x},\hat{c}),
\end{equation}
where the $E_{X,Y}$ represents the expectation in the $(X,Y)$ space. Strength can be regarded as a measure for characterizing the average accuracy of an individual base classifier.
\end{definition}

We define $J(\hat{x},\hat{c})$ as the class $c_j\neq \hat{c}$ that maximizes $P_S(f(\hat{x},S)=c_j)$, which represents the class that are most prone to be misclassified by base classifiers given the input $\hat{x}$. Subsequently, we can introduce a raw margin function for each base classifier with respect to the input sample $\hat{x}$ as follows.
\begin{definition}
\label{definition-raw-margin-function}
The raw margin function is defined as:
      \begin{equation}
        \mathbb{R}(S)=\mathbb{I}(f(\hat{x},S)=\hat{c})-\mathbb{I}(f(\hat{x},S)=J(\hat{x},\hat{c})).
        \end{equation}
\end{definition}
        Based on Definition \ref{definition-raw-margin-function}, we have:
        \begin{equation}
            \label{equation:MF=ES}
            MF^*(\hat{x},\hat{c})=E_S[\mathbb{I}(f(\hat{x},S)=\hat{c})-\mathbb{I}(f(\hat{x},S)=J(\hat{x},\hat{c}))]=E_S\mathbb{R}(S).
        \end{equation}

\begin{theorem}
The generalization error of the COST algorithm has the following upper bound
\begin{equation}
    \varepsilon^*\leq \frac{\bar{\rho}(1-\mathbb{S}^2)}{\mathbb{S}^2},
\end{equation}
where $\bar{\rho}=E_{S,S'}[\rho_{X,Y}(S,S')]$ is the average correlation between two base classifiers by computing the expectation over all pairs of random subspaces $S$ and $S'$. 
\end{theorem}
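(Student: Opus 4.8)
The plan is to follow Breiman's variance-based argument for random forests, since this theorem is the exact analogue of his generalization bound with the random subspace $S$ playing the role of his random parameter. The starting point is Chebyshev's inequality. Since $\mathbb{S}=E_{X,Y}[MF^*(\hat{x},\hat{c})]$ is the mean of the margin function and the generalization error is $\varepsilon^*=P_{X,Y}(MF^*(\hat{x},\hat{c})<0)$, then (assuming $\mathbb{S}>0$) the event $\{MF^*<0\}$ is contained in $\{|MF^*-\mathbb{S}|\geq \mathbb{S}\}$, so Chebyshev gives $\varepsilon^* \leq \mathrm{Var}(MF^*)/\mathbb{S}^2$. Everything then reduces to showing $\mathrm{Var}(MF^*)\leq \bar{\rho}(1-\mathbb{S}^2)$.

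To bound the variance I would exploit Equation~(\ref{equation:MF=ES}), which expresses $MF^*(\hat{x},\hat{c})=E_S[\mathbb{R}(S)]$ as an expectation of the raw margin over the subspace variable. Introducing an independent copy $S'$ of $S$ gives $MF^*(\hat{x},\hat{c})^2 = E_{S,S'}[\mathbb{R}(S)\mathbb{R}(S')]$, and taking the expectation over $(X,Y)$ and interchanging the order of expectation yields $E_{X,Y}[(MF^*)^2]=E_{S,S'}[E_{X,Y}(\mathbb{R}(S)\mathbb{R}(S'))]$. The next step is the key algebraic identity: writing the inner expectation in terms of the correlation $\rho_{X,Y}(S,S')$ and the per-subspace standard deviations $sd(S)$ of $\mathbb{R}(S)$ over the $(X,Y)$ space, namely $E_{X,Y}[\mathbb{R}(S)\mathbb{R}(S')]=\rho_{X,Y}(S,S')\,sd(S)\,sd(S')+E_{X,Y}[\mathbb{R}(S)]\,E_{X,Y}[\mathbb{R}(S')]$. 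After applying $E_{S,S'}$ and subtracting $\mathbb{S}^2$ (which is exactly the contribution of the product-of-means term, since $E_S E_{X,Y}[\mathbb{R}(S)]=\mathbb{S}$), the variance collapses to $\mathrm{Var}(MF^*)=E_{S,S'}[\rho_{X,Y}(S,S')\,sd(S)\,sd(S')]=\bar{\rho}\,(E_S[sd(S)])^2$ using the definition of $\bar{\rho}$ as the standard-deviation-weighted average correlation.

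Finally I would close the bound on $(E_S[sd(S)])^2$. By Jensen's inequality $(E_S[sd(S)])^2\leq E_S[sd(S)^2]=E_S[\mathrm{Var}_{X,Y}(\mathbb{R}(S))]$. Since $\mathbb{R}(S)\in\{-1,0,1\}$ by Definition~\ref{definition-raw-margin-function}, we have $\mathbb{R}(S)^2\leq 1$, hence $\mathrm{Var}_{X,Y}(\mathbb{R}(S))=E_{X,Y}[\mathbb{R}(S)^2]-(E_{X,Y}[\mathbb{R}(S)])^2\leq 1-(E_{X,Y}[\mathbb{R}(S)])^2$. Taking $E_S$ and applying Jensen once more gives $E_S[sd(S)^2]\leq 1-(E_S E_{X,Y}[\mathbb{R}(S)])^2=1-\mathbb{S}^2$. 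Chaining these inequalities yields $\mathrm{Var}(MF^*)\leq \bar{\rho}(1-\mathbb{S}^2)$, and substituting into the Chebyshev bound delivers the claimed $\varepsilon^*\leq \bar{\rho}(1-\mathbb{S}^2)/\mathbb{S}^2$.

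I expect the main obstacle to be the bookkeeping in the middle paragraph, specifically the definition of $\bar{\rho}$ and the verification that $E_{S,S'}[\rho_{X,Y}(S,S')\,sd(S)\,sd(S')]=\bar{\rho}\,(E_S[sd(S)])^2$ is consistent with the plain average $\bar{\rho}=E_{S,S'}[\rho_{X,Y}(S,S')]$ written in the statement. Reconciling the unweighted average of pairwise correlations with the standard-deviation-weighted average that actually arises in the variance computation requires care, since Breiman himself defines the mean correlation in the weighted form; I would either adopt the weighted definition explicitly or invoke a standing assumption that the per-subspace standard deviations $sd(S)$ are essentially constant so the two coincide. The remaining pieces are routine: the single application of Chebyshev, the two applications of Jensen, and the crude bound $\mathbb{R}(S)^2\leq 1$.
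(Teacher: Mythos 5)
Your proposal is correct and follows essentially the same route as the paper's proof: Chebyshev's inequality applied to the margin function, the i.i.d.\ copy $S'$ to write $[E_S\mathbb{R}(S)]^2=E_{S,S'}[\mathbb{R}(S)\mathbb{R}(S')]$, the correlation decomposition of the variance, two applications of Jensen, and the crude bound $\mathbb{R}(S)^2\leq 1$. Notably, the subtlety you flag at the end is real: the paper states $\bar{\rho}=E_{S,S'}[\rho_{X,Y}(S,S')]$ as a plain average yet uses the identity $E_{S,S'}[\rho_{X,Y}(S,S')\sigma_{X,Y}(\mathbb{R}(S))\sigma_{X,Y}(\mathbb{R}(S'))]=\bar{\rho}\,[E_S(\sigma_{X,Y}(\mathbb{R}(S)))]^2$ without justification, which holds only under the standard-deviation-weighted definition of $\bar{\rho}$ (as in Breiman) or a constancy assumption on $\sigma_{X,Y}(\mathbb{R}(S))$ --- so your explicit treatment of this point is more careful than the paper's own proof.
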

\begin{proof}
According to Lemma \ref{Lemma:converge}, we have:
\begin{equation}
    MF^*(\hat{x},\hat{c})=P_S(f(\hat{x},S)=\hat{c})-\underset{c_j\neq \hat{c}}{max}P_S(f(\hat{x},S)=c_j).
\end{equation}
If we assume that the strength $\mathbb{S}=E_{X,Y}MF^*(\hat{x},\hat{c})$ is larger than $0$, then we can apply the Chebyshev's inequality to obtain:
\begin{align}
    \varepsilon^*&=P_{X,Y}[MF^*(\hat{x},\hat{c})<0]\nonumber \\
    &\leq P_{X,Y}[\mathbb{S}-MF^*(\hat{x},\hat{c})\ge \mathbb{S}]\nonumber \\
    &=P_{X,Y}[|MF^*(\hat{x},\hat{c})-\mathbb{S}|\ge \mathbb{S}]\nonumber \\
    &\leq \frac{Var_{X,Y}(MF^*(\hat{x},\hat{c}))}{\mathbb{S}^2}.
\end{align}

For any function $f$, we have:
\begin{equation}
    E_S[f(S)]^2=E_{S,S'}[f(S)f(S')],
\end{equation}
where $S,S'$ are two i.i.d. random variables. According to Equation (\ref{equation:MF=ES}), we know that $MF^*(\hat{x},\hat{c})=E_S\mathbb{R}(S)$, then we have:
\begin{equation}
    [MF^*(\hat{x},\hat{c})]^2=[E_S\mathbb{R}(S)]^2=E_{S,S'}[\mathbb{R}(S)\mathbb{R}(S')].
\end{equation}
Consequently, we can compute the variance as follows:
\begin{align}
    Var_{X,Y}(MF^*(\hat{x},\hat{c}))&=E_{X,Y}[[MF^*(\hat{x},\hat{c}]]^2)-[E_{X,Y}(MF^*(\hat{x},\hat{c}))]^2 \nonumber \\
    &=E_{X,Y}[E_{S,S'}[\mathbb{R}(S)\mathbb{R}(S')]] - [E_{X,Y}(E_S\mathbb{R}(S))]^2 \nonumber \\
    &=E_{X,Y}[E_{S,S'}[\mathbb{R}(S)\mathbb{R}(S')]] - [E_{S}(E_{X,Y}\mathbb{R}(S))]^2 \nonumber \\
    &=E_{S,S'}[E_{X,Y}[\mathbb{R}(S)\mathbb{R}(S')]] - E_{S,S'}[E_{X,Y}\mathbb{R}(S)E_{X,Y}\mathbb{R}(S')] \nonumber \\
    &=E_{S,S'}[Cov_{X,Y}(\mathbb{R}(S)\mathbb{R}(S'))] \nonumber \\
    &=E_{S,S'}[\rho_{X,Y}(S,S')\sigma_{X,Y}(\mathbb{R}(S))\sigma_{X,Y}(\mathbb{R}(S'))]\nonumber \\
    &=\bar{\rho}[E_S(\sigma_{X,Y}(\mathbb{R}(S)))]^2,
\end{align}
where $\bar{\rho}=E_{S,S'}[\rho_{X,Y}(S,S')]$ and $\sigma_{X,Y}(\mathbb{R}(S))$ is the standard deviation of $\mathbb{R}(S)$.

We know that for a given random variable $Z$, the variance $Var(Z)$ is non-negative, which implies that $[E(Z)]^2\leq E(Z^2)$, then we have:
\begin{align}
    Var_{X,Y}(MF^*(\hat{x},\hat{c}))&=E_{S,S'}[\rho_{X,Y}(S,S')\sigma_{X,Y}(\mathbb{R}(S))\sigma_{X,Y}(\mathbb{R}(S'))] \nonumber \\
    &=\bar{\rho}[E_S(\sigma_{X,Y}(\mathbb{R}(S)))]^2 \nonumber \\
    &\leq \bar{\rho} E_S(\sigma_{X,Y}(\mathbb{R}(S))^2) \nonumber \\
    &=\bar{\rho}E_S(Var_{X,Y}(\mathbb{R}(S))),
\end{align}
and we know that
\begin{align}
    E_S(Var_{X,Y}(\mathbb{R}(S)))&=E_S[E_{X,Y}[\mathbb{R}(S)^2]-E_{X,Y}[\mathbb{R}(S)]^2] \nonumber \\
    &=E_S[E_{X,Y}[\mathbb{R}(S)^2]]-E_S[E_{X,Y}\mathbb{R}(S)]^2] \nonumber \\
    &\leq E_S[E_{X,Y}[\mathbb{R}(S)^2]]-[E_S(E_{X,Y}[\mathbb{R}(S)])]^2 \nonumber \\
    &=E_S[E_{X,Y}[\mathbb{R}(S)^2]]-[E_{X,Y}(E_{S}[\mathbb{R}(S)])]^2 \nonumber \\
    &=E_S[E_{X,Y}[\mathbb{R}(S)^2]]-[E_{X,Y}MF^*(\hat{x},\hat{c})]^2 \nonumber \\
    &\leq 1-\mathbb{S}^2.
\end{align}
Hence, we can derive the following upper bound:
\begin{equation}
    \varepsilon^*\leq \frac{Var_{X,Y}(MF^*(\hat{x},\hat{c}))}{\mathbb{S}^2}\leq \frac{\bar{\rho}E_S(Var_{X,Y}(\mathbb{R}(S)))}{\mathbb{S}^2}\leq \frac{\bar{\rho}(1-\mathbb{S}^2)}{\mathbb{S}^2}.
\end{equation}
The theorem is proved.
\end{proof}

\subsection{Time Complexity}
The running time of the COST algorithm is mainly consumed by three procedures: subspace generation and selection, \textit{p}-value calculation and the selection of parameter \textit{r}.
\subsubsection{Subspace generation and selection}
In this step, we need to iterate $b_1$ times, with each iteration generating $b_2$ candidate subspaces and selecting the best one, ultimately resulting in $b_1$ subspaces being produced.

In each iteration, we randomly generate $b_2$ feature subsets of $F=(a_1,...,a_d)$. The time complexity required to obtain the frequency distributions of all distinct feature value sets on every class for each feature subset is at most $O((n+m)\cdot d)$ with the help of a hash table. Therefore, the time complexity required for all $b_2$ candidate subspaces is $O(b_2\cdot d\cdot (n+m))$. Subsequently, we aim to select the candidate subspace with the highest relative risk.

For a given candidate subspace $cs_i,1\leq i \leq b_2$, we assume that it has $u$ unique feature value sets. We can collect the frequency of each unique feature value set in every class in $O(n)$ time. Consequently, we can compute the relative risk of each feature value in constant time $O(1)$ using Equation (\ref{math:single-RR}). Therefore, the time complexity required to calculate the sum of relative risks of all $u$ feature value sets and to obtain the average relative risk of $cs_i$ is $O(u)$. In summary, the time complexity for calculating the relative risks of $b_2$ candidate subspaces and selecting the best one is $O(b_2\cdot(n+u))$. In the worst-case scenario, when $u$ equals $n$, the time complexity is $O(b_2\cdot(n+n))=O(b_2\cdot n)$.

After iterating $b_1$ times, we have a time complexity of $O(b_1\cdot b_2 \cdot d\cdot(n+m)+b_1\cdot b_2 \cdot n)=O(b_1\cdot b_2 \cdot d\cdot (n+m))$.

\subsubsection{\textit{P}-value calculation}
For each subspace $s_{i}$, suppose it has $u$ distinct feature value sets. To obtain the frequency of each feature value set within each class, we can just scan the training set once to fulfill this task. If the feature value set and each class label are stored in a hash table, we can finish this task in a linear expected time $O(n)$.  

Based on the frequency information collected above, we can swiftly construct contingency tables akin to Table \ref{table:fisher} for each class. If we compute the exact \textit{p}-value according to Equation (\ref{math:p-value}), then the worst-case time complexity will be $O(n)$. To improve the running efficiency, we can adopt the \textit{p}-value upper bound presented by \cite{hamalainen2016new} instead of the exact \textit{p}-value. This upper bound can be calculated in $O(1)$ if factorials up to $n!$ have been calculated in advance and stored in the main memory. Hence, the time complexity of calculating \textit{p}-values for all unique feature value sets across all classes is $O(uk)$, where $k$ is the number of classes. Consequently, the overall time complexity of calculating all \textit{p}-values on the subspace $s_{i}$ is  $O(n+uk)$. Note that in the worst-case scenario, where $u$ equals $n$, hence the worst-case time complexity is $O(n+nk)=O(nk)$. Considering all $b_1$ subspaces, the overall time complexity of this step is $O(b_1 \cdot n \cdot k)$.
\subsubsection{The selection of parameter “\textit{r}”}
In the previous step, we have computed \textit{p}-values for all possible feature value sets with respect to each class. Consequently, for the validation set, we can construct a $m\times b_1$ matrix for each class, where $m$ is the number of validation samples and $b_1$ is the number of subspaces. The element in the $i$-th row and $j$-th column of this matrix represents the \textit{p}-value for the $i$-th validation sample in the $j$-th subspace with respect to that class. The time complexity for constructing such $k$ tables is $O(k\cdot b_1\cdot m)$. Subsequently, for each class-specific table, sorting \textit{p}-values from $b_1$ subspaces for each sample in each row requires a time complexity of $O(b_1\cdot logb_1)$. Therefore, the time complexity for sorting \textit{p}-values in all  $k$ tables is $O(k\cdot m\cdot b_1 \cdot logb_1)$.

Next, for each sorted table corresponding to each class, we select the $m$ \textit{p}-values in the  \textit{r}-th column to obtain the final \textit{p}-values for all $m$ validation samples with respect to each class and the time complexity is $O(m\cdot k)$. As a result, we obtain a table of size $m\times k$, where the element in the \textit{i}-th row and \textit{j}-th column represents the \textit{p}-value for the \textit{i}-th validation sample in the \textit{j}-th class. 

Finally, we assign the class label with the smallest \textit{p}-value to each sample, thus obtaining the predicted labels for the validation set under the given parameter $r$, and the time complexity is $O(m\cdot k)$. Comparing these predictions with the ground-truth labels yields the classification accuracy for the validation set under the parameter $r$, with the time complexity of this step being $O(m)$. By iterating over every possible $r$, we select the $r$ with the highest accuracy as the optimal parameter. 

Therefore, the overall time complexity of this part is $O(k\cdot b_1 \cdot m+k\cdot m \cdot b_1\cdot logb_1)+O(b_1\cdot(m\cdot k+m\cdot k+m))=O(k\cdot m \cdot b_1\cdot logb_1)$.

\subsubsection{Overall time complexity}
In summary, the overall time complexity of COST is the sum of time complexities of above three parts: $O(b_1\cdot b_2\cdot d\cdot (n+m)+b_1\cdot n\cdot k+k\cdot m\cdot b_1 \cdot logb_1)$.

\section{Experiments}\label{experiments}
We conduct comprehensive experiments on 28 real-world data sets to assess the performance of the COST method. In particular, our objective is to address the following Research Questions (RQs):
\begin{itemize}
	\item RQ1: Can COST achieve comparable performance to classic classifiers such as random forest and those state-of-the-art random subspace ensemble classification algorithms in regular classification?
	\item RQ2: Can COST achieve better performance than existing classifiers in the context of selective classification?
	\item RQ3: Is COST robust to its parameters and the \textit{p}-value combination method?
\end{itemize}
\subsection{Experimental Setup}
\subsubsection{Data sets}
The data sets used in the experiments could be downloaded from \cite{dua2017uci} and \cite{derrac2015keel}. The main characteristics of $28$ datasets are provided in Table \ref{table:dataset}. We treat the missing value in each feature as a special feature value.

\begin{table}[h]
	\centering
	\caption{The main characteristics of 28 data sets.}
	\begin{tabular}{cccc}
		\toprule
		\label{table:dataset}
		Dataset & \#Samples & \#Features &\#Classes \\ 
		\midrule
		Balance Scale & 625& 4&3 \\ 
		Breast Cancer & 699 & 9&2 \\ 
		Car & 1728& 6&4  \\ 
		Chess & 3196&36 &2 \\
		Cleveland  &303 &13 &5 \\ 
		Dermatology  & 366& 34&6 \\ 
		Dna-promoter  &106 & 57&2 \\ 
		Haberman  &  306& 3&2 \\ 
		Happiness & 143&6 &2 \\ 
		Hayes-Roth & 132&4 &3  \\ 
		Heart &270 &13 & 2 \\ 
		House-votes  &435 &16 &2 \\ 
		Iris &150 & 4& 3\\ 
		Led7digit & 500&7 &10 \\ 
		Lymphography &148 &18 &4 \\ 
		Monks-2 & 432& 6&2 \\ 
		Mushroom & 8124&22 &2 \\ 
		Newthyroid & 215& 5& 3\\ 
		Nursery &11960 &8 &5 \\ 
		Pima &768 &8 &2 \\ 
		Seeds & 210&7 &3 \\ 
		Solar-flare & 323& 9& 6\\ 
		Soybean-Small &47 &35 &4 \\ 
		Tic-tac-toe &958 &9 &2 \\ 
		Titanic & 2201& 3& 2\\ 
		Vehicle & 846& 18&4 \\ 
		Wine &178 &13 &3 \\ 
		Zoo & 101& 16&7 \\ 
		\bottomrule
	\end{tabular}
\end{table}

\subsubsection{Evaluation measures}
\label{section:evaluation}
\begin{itemize}
	\item In the context of regular classification, the standard classification accuracy is employed as the evaluation criterion. In the experiment, we repeat a five-fold cross-validation 10 times to obtain the average accuracy on each data set.
	\item In selective classification, a universally accepted evaluation criterion is still absent. For each test sample $\hat{x}$, let $\mathsf{y}$ denotes the set of its true labels and $\mathsf{\hat{y}}$ denotes the set of labels predicted by a classifier with refine and reject options. Note that if the test sample is an outlier or it is rejected, an additional special class label $c_0$ is introduced in such scenarios. Then, we can use the Jaccard coefficient to evaluate the prediction result $\mathsf{\hat{y}}$ of a selective classifier on the test sample $\hat{x}$: 
	\begin{equation}
		JC(\hat{x}) = \frac{|\mathsf{\hat{y}} \cap \mathsf{y}|}{| \mathsf{\hat{y}}\cup \mathsf{y}|}.
	\end{equation}
	 The average value of the above coefficients for all test samples, is named as Jaccard accuracy (denoted by JacAcc), can be used as the evaluation metric for comparing different selective classifiers.
	 
	 We have the following remarks for this new evaluation metric:
	 \begin{enumerate}
	 	\item In the context of regular classification, JacAcc will be identical to the standard classification accuracy since $|\mathsf{y}|=|\mathsf{\hat{y}}|=1$ and JacAcc = 1 or 0. 
	 	\item In the context of classification with reject option, we have several possibilities. First, if the test sample is not an outlier ($c_0 \notin \mathsf{y}$) and should not be rejected, we will make an error if the reject option is adopted so that $JC(\hat{x})=0$ in this case. Similarly, $JC(\hat{x})$ will be $0$ as well if the test sample $\hat{x}$ is an outlier and it is not rejected. If $\hat{x}$ is an outlier and we reject it, then $JC(\hat{x})=1$.
	 	\item  In the context of classification with refine option, $JC(\hat{x})$ will equal to 1 only $\mathsf{y}$ and $\mathsf{\hat{y}}$ are identical. Meanwhile, $JC(\hat{x})$ will be reduced if we include more irrelevant labels in $\mathsf{\hat{y}}$. 
	 \end{enumerate}
\end{itemize}

\subsubsection{Baseline methods}
In the performance comparison, we include the following baseline methods:
\begin{itemize}
	\item The \textbf{classic classifiers}: Naive Bayes (NB), Support Vector Machine (SVM), Decision Tree (DT), and Random Forest (RF). We use implementations in the scikit-learn package \cite{pedregosa2011scikit} with their default parameter settings (the default number of trees of RF is 100). The categorical features are first transformed into continuous ones through one-hot encoding before being fed into NB and SVM.
	\item The \textbf{state-of-the-art random subspace ensemble classification algorithms}: \textbf{Ra}ndom \textbf{S}ubsapce \textbf{E}nsemble (RaSE, \cite{tian2021rase}) and \textbf{P}arametric \textbf{R}andom \textbf{S}ubspace (PRS, \cite{huynh2023optimizing}). The source codes for RaSE and PRS are available online at \url{https://cran.r-project.org/web/packages/RaSEn/} and \url{https://github.com/vahuynh/PRS/tree/main/code}, respectively. As the example code in PRS utilizes k-nearest neighbors (kNN) as the base classifier, we have also chosen kNN as the base classifier for both PRS and RaSE. Similar to RF, the number of base classifiers for RaSE and PRS is set to be 100. For each base classifier, the number of candidate subspaces is fixed to be 10, and all other parameters are specified to be their default values. Similar to NB and SVM, the categorical features are first transformed into continuous features through one-hot encoding before being fed into RaSE and PRS.
	\item The\textbf{ conformal ensemble predictors}: Fisher method \cite{toccaceli2017combination} and Majority-vote method \cite{cherubin2019majority}. The three base classifiers for both methods are DT, SVM and kNN. The source code for conformal prediction is available in the Python package “crepes” (\url{https://github.com/henrikbostrom/crepes}). In the Fisher method, we select the class with the highest \textit{p}-value as the predicted class; In the Majority-vote method, we choose the class with the most votes as the predicted class.
	\item The \textbf{selective classifiers} with both reject and refine option: \textbf{BCOPS} \cite{guan2022prediction} and \textbf{GPS} \cite{zhou2023JMLR}. The source codes for BCOPS and GPS are available online at \url{https://github.com/LeyingGuan/BCOPS} and \url{https://github.com/Zhou198/GPS}, respectively. Both BCOPS and GPS are executed under their default parameter values.
\end{itemize}
 COST is an algorithm specifically designed for categorical data, thus we discretize continuous features via $k$-means in which the number of clusters (i.e., the number of feature values) is set to be the number of classes. Similarly, the number of chosen subspaces for COST is set to be 100, and the number of candidate subspaces in each round is set to be 10. In addition to the randomly chosen subspaces, COST includes the original features as well. That is, $100+d$ subspaces are used by COST in which each of $d$ subspaces corresponds to one of $d$ features in the data set. The source code of COST is available at \url{https://github.com/zrli2000/COST}.

\subsection{Investigation of RQ1}
For the task of regular classification, the performance comparison results in terms of classification accuracy, are depicted in Table \ref{table:acc}. From this table, some important observations are summarized as follows. 

Firstly, compared to those classic classifiers, COST is slightly better than NB, DT and SVM. This happens probably because COST is an ensemble classification method so that it can yield better performance than these non-ensemble classifiers. However, the classification accuracy of COST is worse than that of RF mainly due to the fact the base classifier in COST is much simpler than decision tree in RF. Overall, COST is comparable to these classic classifiers in terms of the classification accuracy. 

Secondly, COST has quite similar performance to RaSE and PRS, two recently proposed subspace ensemble classification methods. This can be attributed to the fact that COST follows the same methodology used in RaSE for candidate subspace selection. Meanwhile, our method can also achieve comparable performance to two conformal ensemble classifiers.

\begin{table}[!htbp]
	\centering
	\caption{The performance comparison of different classifiers in the term of classification accuracy. On several data sets, the results of RaSE and PRS are N/A because the number of samples in certain classes is less than the minimal class size threshold that is required by these two algorithms. For the same reason, the results of CP-Fisher and CP-Vote on some data sets are N/A as well.}
	\begin{tabular}{cccccccccc}
		\toprule
		\label{table:acc}
		Dataset          & COST  & RaSE  & PRS & CP-Fisher & CP-Vote  & RF    & NB    & SVM   & DT    \\
		\midrule
		Balance Scale    & 0.84 & 0.84 & 0.86 & 0.85       & 0.88     & 0.84 & 0.90 & 0.91 & 0.78 \\
		Breast Cancer & 0.96 & 0.97 & 0.97 & 0.96       & 0.95     & 0.97 & 0.96 & 0.97 & 0.94 \\
		Car              & 0.95 & 0.87 & 0.92 & 0.96       & 0.94     & 0.98 & 0.80 & 0.97 & 0.98 \\
		Chess            & 0.93 & 0.97 & 0.98 & 0.98       & 0.97     & 0.99 & 0.63 & 0.97 & 1.00 \\
		Cleveland        & 0.59 & 0.56 & 0.55 & 0.54       & 0.54     & 0.57 & 0.54 & 0.54 & 0.49 \\
		Dermatology      & 0.96 & 0.97 & 0.97 & 0.90       & 0.88     & 0.98 & 0.88 & 0.71 & 0.94 \\
		Dna-promoter     & 0.79 & 0.81 & 0.89 & 0.86       & 0.89     & 0.88 & 0.87 & 0.92 & 0.72 \\
		Haberman         & 0.76 & 0.73 & 0.75 & 0.74       & 0.74     & 0.69 & 0.74 & 0.73 & 0.66 \\
		Happiness        & 0.63 & 0.57 & 0.53 & 0.59       & 0.53     & 0.59 & 0.58 & 0.56 & 0.52 \\
		Hayes-Roth            & 0.77 & 0.72 & 0.77 & 0.71       & 0.70     & 0.82 & 0.78 & 0.78 & 0.82 \\
		Heart            & 0.82 & 0.83 & 0.82 & 0.71       & 0.66     & 0.83 & 0.81 & 0.68 & 0.73 \\
		House-votes      & 0.95 & 0.95 & 0.96 & 0.95       & 0.94     & 0.96 & 0.94 & 0.95 & 0.94 \\
		Iris             & 0.95 & 0.95 & 0.96 & 0.95       & 0.95     & 0.95 & 0.95 & 0.97 & 0.94 \\
		Led7digit        & 0.73 & 0.68 & 0.63 & 0.70       & 0.55     & 0.71 & 0.62 & 0.72 & 0.70 \\
		Lymphography     & 0.81 &   N/A   &  N/A    &    N/A        &    N/A      & 0.84 & 0.72 & 0.83 & 0.78 \\
		Monks-2            & 0.98 & 0.99 & 1.00 & 0.99       & 0.99     & 0.99 & 0.80 & 0.99 & 1.00 \\
		Mushroom         & 1.00 & 1.00 & 1.00 & 1.00       & 1.00     & 1.00 & 0.96 & 1.00 & 1.00 \\
		Newthyroid       & 0.91 & 0.95 & 0.93 & 0.89       & 0.91     & 0.96 & 0.96 & 0.83 & 0.93 \\
		Nursery          & 0.97 & N/A     &   N/A   &     N/A       &    N/A      & 0.98 & 0.84 & 1.00 & 1.00 \\
		Pima             & 0.74 & 0.76 & 0.75 & 0.75       & 0.73     & 0.76 & 0.76 & 0.76 & 0.69 \\
		Seeds            & 0.89 & 0.94 & 0.91 & 0.89       & 0.88     & 0.93 & 0.90 & 0.90 & 0.91 \\
		Solar-flare      & 0.69 & 0.71 & 0.72 & 0.70       & 0.70     & 0.72 & 0.57 & 0.73 & 0.70 \\
		Soybean-Small          & 0.95 &  N/A    &  N/A    & 1.00       & 1.00     & 1.00 & 1.00 & 1.00 & 0.96 \\
		Tic-tac-toe      & 0.92 & 0.94 & 0.92 & 0.95       & 0.95     & 0.95 & 0.67 & 0.99 & 0.89 \\
		Titanic          & 0.78 & 0.77 & 0.72 & 0.79       & 0.78     & 0.79 & 0.77 & 0.78 & 0.79 \\
		Vehicle          & 0.66 & 0.72 & 0.72 & 0.64       & 0.56     & 0.75 & 0.46 & 0.49 & 0.70 \\
		Wine             & 0.92 & 0.97 & 0.96 & 0.88       & 0.80     & 0.98 & 0.97 & 0.68 & 0.91 \\
		Zoo              & 0.92 & 0.89 & 0.91 & 0.88       & 0.90     & 0.96 & 0.95 & 0.94 & 0.95 \\
		\midrule
		Average Acc         & 0.85 & 0.84 & 0.84 & 0.84       & 0.82     & 0.87 & 0.80 & 0.83 & 0.84\\
		Average Rank & 5.13 & 4.80 & 4.18 & 5.52 & 6.27& 2.75&5.82 & 4.34 & 5.45 \\
		\bottomrule
	\end{tabular}
\end{table}

We also compared the running time of these classification algorithms, as presented in Figure \ref{figure2}. It indicates that the COST algorithm demonstrates a notable efficiency advantage over other subspace ensemble methods such as RaSE and PRS. However, it's important to note that while COST excels within its sub-category, it still lags behind traditional, non-ensemble algorithms like DT, SVM, and NB. This suggests that we need to further improve its running efficiency in order to handle large data sets in practice. 

\begin{figure}[!htbp]
	\centering
	\includegraphics[width=\columnwidth]{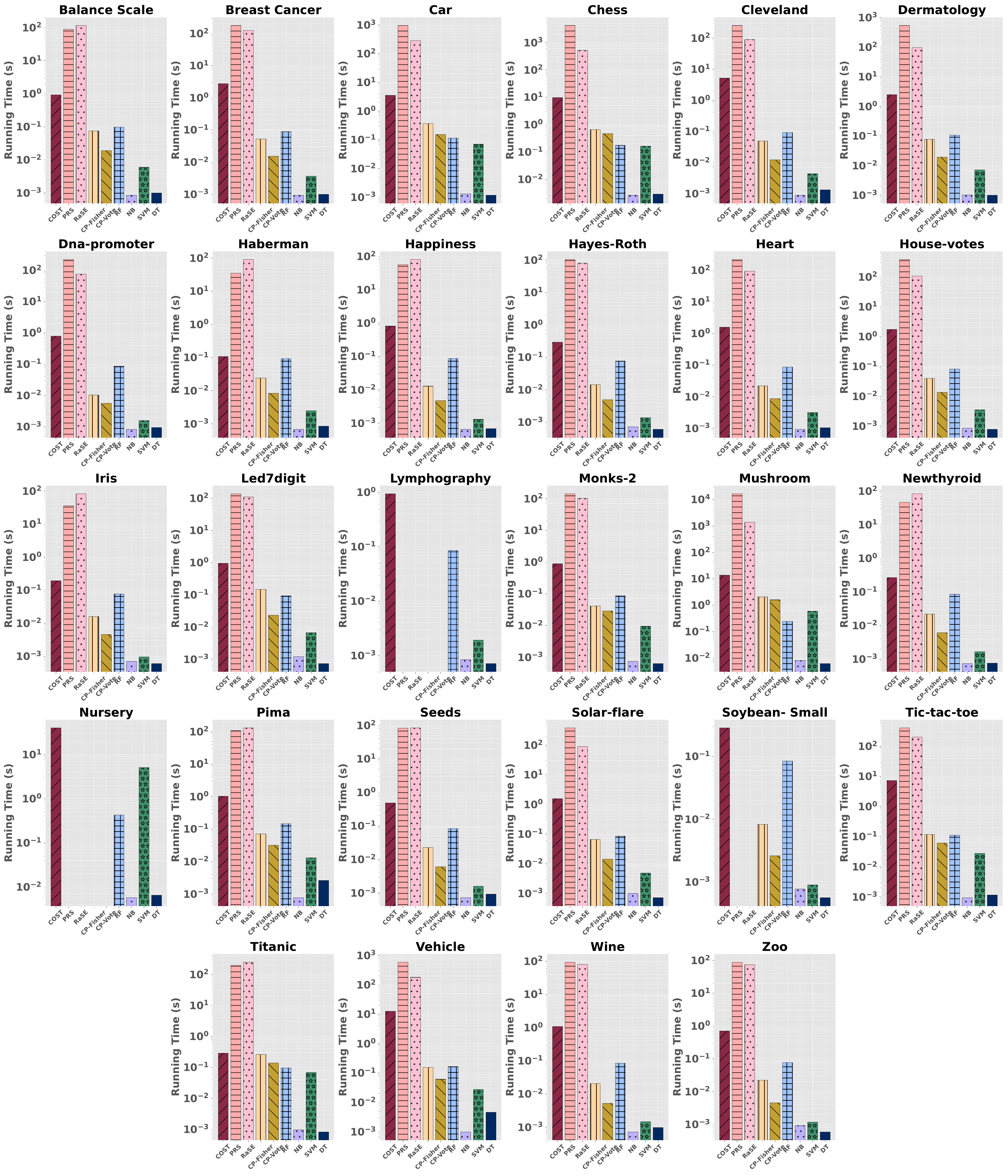}
	\caption{Running time comparison of different classifiers. The running time was measured in seconds and all experiments were conducted on a PC equipped with an M1 CPU and 16GB of memory.}
	\label{figure2}
\end{figure}

\subsection{Investigation of RQ2}

To test the performance of COST in the context of selective classification, we compare it with BCOPS and GPS by employing the JacAcc defined in Section \ref{section:evaluation} as the performance indicator. The significance level parameter $\alpha$ for COST, BCOPS, and GPS is set to be 0.05. Moreover, in line with the recommendations from the authors of the GPS algorithm, we have used the test data as “unlabeled data”, combining it with the training data to form a new training set for GPS.  Correspondingly, the calibration data is also treated as “unlabeled data” and combined with the original calibration data to constitute a new calibration set for GPS.

The detailed experimental results in terms of JacACC are given in Table \ref{table:normal}. Note that all samples in these 28 data sets are associated with only one label and it is assumed that no outliers are present. Hence, one algorithm will achieve better performance in terms of JacAcc if it returns smaller prediction sets and opts for fewer reject options on these data sets. Excessively large prediction sets, even if they include the correct class labels, will result in lower JacACC scores. Consequently, the ideal prediction scenario for these data sets would involve a prediction set containing only one correct class label. 

\begin{table}[h]
	\centering
	\caption{The performance comparison of COST, GPS and BCOPS in the term of JacAcc. On several data sets, the results of GPS are N/A because it reports errors due to unknown reasons.}
	\setlength{\tabcolsep}{5mm}
	\begin{tabular}{cccc}
		\toprule
		\label{table:normal}
		Dataset          & COST  & GPS  & BCOPS      \\
		\midrule
		Balance Scale    & 0.64 & 0.48 & 0.44 \\
		Breast Cancer & 0.92 & 0.89 & 0.94 \\
		Car              & 0.26 & 0.76 & 0.78 \\
		Chess            & 0.88 & 0.94 & 0.92 \\
		Cleveland        & 0.47 & 0.21 & 0.21 \\
		Dermatology      & 0.59 & 0.66 & 0.38 \\
		Dna-promoter     & 0.27 & 0.50  & 0.56 \\
		Haberman         & 0.75 & 0.50  & 0.51 \\
		Happiness        & 0.10  & 0.50  & 0.50  \\
		Hayes-Roth            & 0.38 & 0.43 & 0.36 \\
		Heart            & 0.68 & 0.50  & 0.03 \\
		House-votes      & 0.68 & 0.82 & 0.89 \\
		Iris             & 0.86 & 0.90  & 0.60  \\
		Led7digit        & 0.43 &   N/A   & 0.17 \\
		Lymphography     & 0.61 &  N/A    & 0.28 \\
		Monks-2            & 0.85 & 0.94 & 0.92 \\
		Mushroom         & 1.00    & 0.98 & 0.95 \\
		Newthyroid       & 0.82 & 0.86 & 0.36 \\
		Nursery          & 0.83 &  N/A    & 0.63 \\
		Pima             & 0.62 & 0.50  & 0.64 \\
		Seeds            & 0.83 & 0.77 & 0.82 \\
		Solar-flare      & 0.62 & 0.39 & 0.26 \\
		Soybean-Small          & 0.65 & 0.67 & 0.25 \\
		Tic-tac-toe      & 0.12 & 0.50  & 0.63 \\
		Titanic          & 0.73 &   N/A   & 0.57 \\
		Vehicle          & 0.44 & 0.43 & 0.58 \\
		Wine             & 0.75 & 0.58 & 0.73 \\
		Zoo              & 0.77 & 0.81 & 0.15\\
		\midrule
		Average JacAcc        & 0.63 & 0.65 & 0.54  \\
		Average Rank     & 1.82 & 1.92 & 2.11 \\
		\bottomrule
	\end{tabular}
\end{table}

The results depicted in Table \ref{table:normal} suggest that our proposed algorithm COST outperforms BCOPS and GPS on the majority of the datasets in terms of JacAcc scores. To further reveal the reason behind this observation, we plot the refine rate (the proportion of test samples with more than one predicted label) and reject rate (the proportion of rejected test samples) of each algorithm in Figure \ref{figure4} and Figure \ref{figure5}, respectively. The results from Figure \ref{figure4} suggest that BCOPS and GPS exhibit a strong propensity to employ the refine option on the majority of datasets, whereas COST shows a tendency to choose the refine option on only a few datasets. Conversely, the findings from Figure \ref{figure5} reveal that while BCOPS and GPS almost never opt for the reject option, COST demonstrates a marked preference for the reject option on several datasets. Overall, COST demonstrates superior performance compared to the other two selective classification algorithms.

\begin{figure}[!htbp]
	\centering
	\includegraphics[width=\columnwidth]{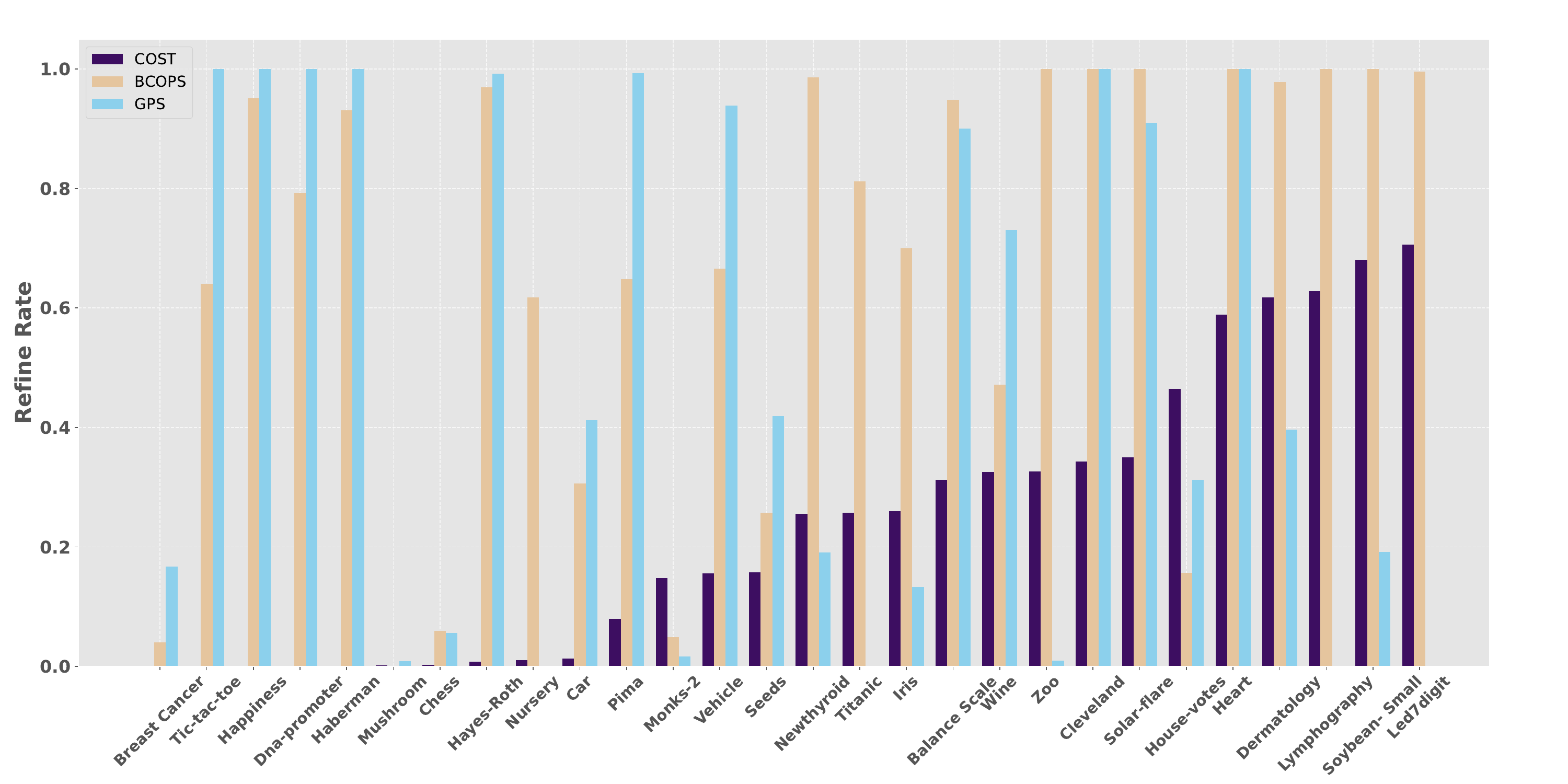}
	\caption{The comparison of COST, BCOPS, and GPS with respect to the refine rate.}
	\label{figure4}
\end{figure}

\begin{figure}[!htbp]
	\centering
	\includegraphics[width=\columnwidth]{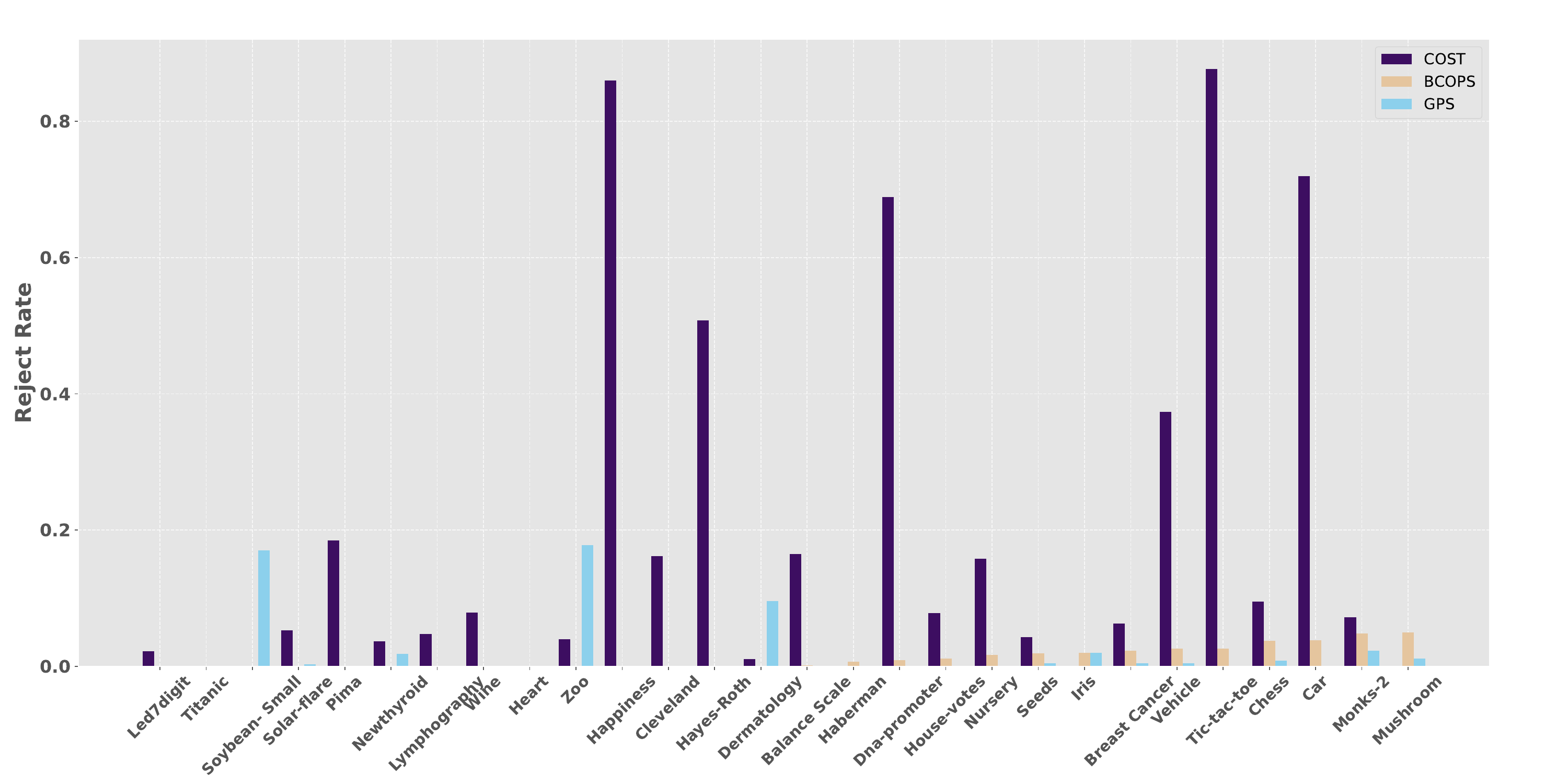}
	\caption{The comparison of COST, BCOPS, and GPS with respect to the reject rate.}
	\label{figure5}
\end{figure}

Furthermore, we also conducted experiments on datasets with outliers to assess the capabilities of these algorithms on identifying outliers by using the reject option. Similar to \cite{tax2008growing}, we selected four datasets with more than two classes and designated samples in one class as outliers. The outlying samples were not included in the training set but were employed during the testing stage. Non-outlier samples were equally divided into two sets: one for training and the other, combined with outliers, for testing. Detailed information regarding these four datasets is provided in Table \ref{table:outlier-datasets}.

\begin{table}[!htbp]
	\centering
	\caption{The data sets in which samples from one class are regarded as outliers. Numbers with a superscript asterisk $^*$ denote the sample size of the outlier class.}
	\setlength{\tabcolsep}{1.1mm}
	\begin{tabular}{cccccc}
		\hline
			\label{table:outlier-datasets}
		Dataset       & \begin{tabular}[c]{@{}c@{}}Normal\\ classes\end{tabular} & \begin{tabular}[c]{@{}c@{}}Outlier\\ class\end{tabular} & \begin{tabular}[c]{@{}c@{}}\#Training\\ samples\end{tabular} & \begin{tabular}[c]{@{}c@{}}\#Testing\\ samples\end{tabular} & Outlier Ratio \\ \hline
		Hayes         & 1-2      & 3      & 51     & $51+30^*=81$     & 22.73\%     \\
		Solar-flare   & 1-5      & 6      & 129    & $129+65^*=194$   & 20.12\%     \\
		Soybean-Small & 1-3      & 4      & 15     & $15+17^*=32$     & 36.17\%     \\
		Zoo           & 1-4,6-7  & 5      & 49     & $48+4^*=52$      & 3.96\%      \\ \hline
	\end{tabular}
\end{table}

In this experiment, to ensure that all algorithms were unaware of the presence of outliers, for the GPS algorithm, we randomly selected only two outlying samples as “unlabeled data” for its training process. Similarly, for all algorithms, the significance level parameter $\alpha$ was set to be 0.05. The detailed experimental results on these data sets with outliers are recorded in Table \ref{table:outlier-result}.

\begin{table}[!htbp]
	\centering
	\caption{The experimental results on data sets in which samples in one class are regarded as outliers and excluded from the training data. For each algorithm on each data set, we record both the JacAcc score and the proportion of outliers that were successfully identified.}
	\setlength{\tabcolsep}{6mm}
	\begin{tabular}{cccc}
		\toprule
		\label{table:outlier-result}
		Dataset & COST & BCOPS  & GPS \\
		\midrule
		Hayes&0.40 (100\%)&0.31 (0\%)&0.37 (14.28\%)\\
		Solar-flare&0.63 (76.92\%)& 0.14 (0\%)&0.40 (66.67\%)\\
		Soybean-Small&0.53 (100\%)& 0.16 (0\%)&0.80 (66.67\%)\\
		Zoo&0.81 (0\%)& 0.15 (0\%)& 0.47 (50.00\%)\\
		\midrule
		Average & 0.59 (69.23\%) & 0.19 (0\%)&0.51 (49.40\%)  \\
		\bottomrule
	\end{tabular}
\end{table}

The results in Table \ref{table:outlier-result} indicate that, on average, the COST algorithm outperforms both GPS and BCOPS, in terms of both JacAcc score and outlier detection rate.

To assess the capabilities of these algorithms on classifying samples with multiple labels by using the refine option, we conducted experiments on four multi-label datasets from Mulan library \cite{tsoumakas2011mulan}. The detailed information regarding four multi-label datasets is provided in Table \ref{table:multi-label-datasets}.

\begin{table}[!htbp]
	\centering
	\caption{The main characteristics of 4 multi-label data sets. The average label cardinality reflects the average number of labels associated with each sample across the entire dataset.}
	\setlength{\tabcolsep}{3.2mm}
	\begin{tabular}{ccccc}
		\toprule
		\label{table:multi-label-datasets}
		Dataset & \#Samples & \#Features & \#Classes &\#Average Label Cardinality \\
		\midrule
		Emotions&593&72 &6 &1.87\\
		Flags&194&19&7&3.39\\
		Scene&2407&294&6 &1.07\\
		Yeast&2417&103&14 &4.24\\
		\bottomrule
	\end{tabular}
\end{table}

In this experiment, we randomly partitioned each data set into training and testing sets, with 70\% samples in the training set and 30\% samples in the testing set. For all algorithms, the significance level parameter $\alpha$ was set to be 0.05. For the GPS and BCOPS algorithms, we converted the multi-label samples in the training set into single-label samples, while the test samples remained unchanged because these two algorithms are incapable of accepting multi-label training data. The detailed experimental results on these data sets are recorded in Tabel \ref{table:multi-label-result}. 

\begin{table}[!htbp]
	\centering
	\caption{The experimental results on 4 multi-label data sets.}
	\setlength{\tabcolsep}{1.4mm} 
	\begin{tabular}{cccccccccc}
		\toprule
		\label{table:multi-label-result}
		 & \multicolumn{3}{c}{JacAcc} & \multicolumn{3}{c}{\parbox{4cm}{\centering Average number of predicted labels}} & \multicolumn{3}{c}{\parbox{4cm}{\centering Average number of correctly predicted ground-truth labels}} \\
		\cmidrule(r){2-4} \cmidrule(r){5-7} \cmidrule(r){8-10}
		Dataset & COST & BCOPS & GPS & COST & BCOPS & GPS & COST & BCOPS & GPS \\
		\midrule
		Emotions & 0.53 & 0.43 & 0.32 & 2.21 & 4.42 & 5.98 & 1.31 &  1.84& 1.89 \\
		Flags    & 0.30 & 0.47 & 0.49 & 2.00 & 6.79 & 6.93 & 1.27 & 3.28 & 3.44 \\
		Scene    & 0.50 & 0.51 & 0.48 & 2.04 & 2.35 & 1.89 & 0.89 & 1.01 & 0.85 \\
		Yeast    & 0.30 & 0.33 & 0.36 & 4.11 & 12.39 & 12.23 & 1.67 & 4.04 & 4.30 \\
		\midrule
		Average  & 0.41 & 0.44 & 0.41 & 2.59 & 6.49 & 6.76 & 1.23 & 2.54 & 2.62 \\
		\bottomrule
	\end{tabular}
\end{table}

From Table \ref{table:multi-label-result}, it is evident that the performance of COST is not optimal in terms of the JacAcc scores. However, the average number of predicted labels of COST is quite close to the average label cardinality in Table \ref{table:multi-label-datasets} for each data set. In contrast, both BCOPS and GPS predict an average number of labels that is almost close to the number of all class labels except for the Scene data set. This suggests that both algorithms are more inclined to adopt the refine option. Compared to the other two algorithms, COST is more inclined towards adopting the reject option and hence the Type-I error in its prediction set is lower.

\subsection{Investigation of RQ3}

Several parameters are required as input in COST. The core parameters of COST include $b_1$ and $b_2$. Therefore, we focus on how the parameters $b_1$ and $b_2$ influence the classification performance and running time of the COST algorithm.

We vary $b_1$ from 10 to 100 and record the JacAcc scores and the running time of COST on different datasets in Figure \ref{figure:b1JA} and Figure \ref{figure:b1Time}. From Figure \ref{figure:b1JA}, it can be observed that as $b_1$ increases, the corresponding JacAcc score exhibits stability, with minimal fluctuations observed. This indicates that, from the perspective of JacAcc scores, COST is insensitive to the value of $b_1$. From Figure \ref{figure:b1Time}, as expected, we can see that as $b_1$ increases, the corresponding runtime of the COST algorithm also increases.

Figure \ref{figure:b2JA} and Figure \ref{figure:b2Time} present the JacAcc scores and the running time of COST when $b_2$ is varied from 1 to 15. From Figure \ref{figure:b2JA}, it can be observed that as $b_2$ increases, the corresponding JacAcc score also exhibits stability. It indicates that COST is also insensitive to the value of $b_2$ in terms of the JacAcc score. From Figure \ref{figure:b2Time}, we can see that similar to $b_1$, the runtime of the COST algorithm also increases as the value of $b_2$ increases.

Finally, to evaluate the impact of different \textit{p}-value combination methods, we selected four distinct methods: rOP, Fisher, minP, and maxP. Detailed performance comparison results are illustrated in Figure \ref{figure:combination-JA}. From Figure \ref{figure:combination-JA}, we can see that the performance of maxP is worst. This is because the maxP method selects the largest \textit{p}-value as the test statistic. As a result, the COST algorithm will have a significant bias towards adopting the reject option. In contrast, the minP method opts for the smallest \textit{p}-value as the test statistic, which results in a tendency to favor the refine option. While after the training and validation process, rOP method selects a “\textit{r}” value that yields the best performance, thereby achieving better classification performance.

\begin{figure}[!htbp]
	\centering
	\includegraphics[width=\columnwidth]{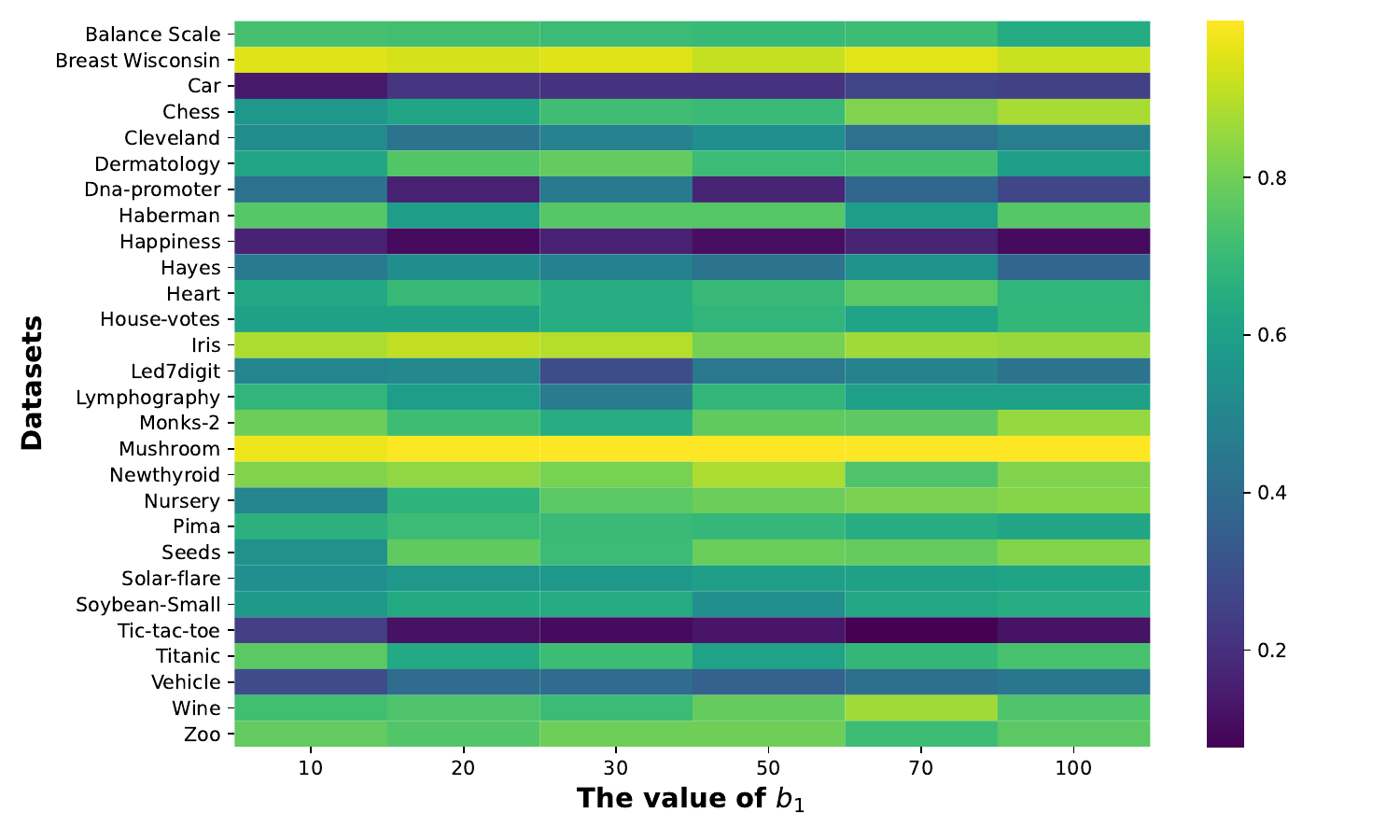}
	\caption{The effect of the value of $b_1$ on COST in terms of the JacAcc score.}
	\label{figure:b1JA}
\end{figure}

\begin{figure}[!htbp]
	\centering
	\includegraphics[width=\columnwidth]{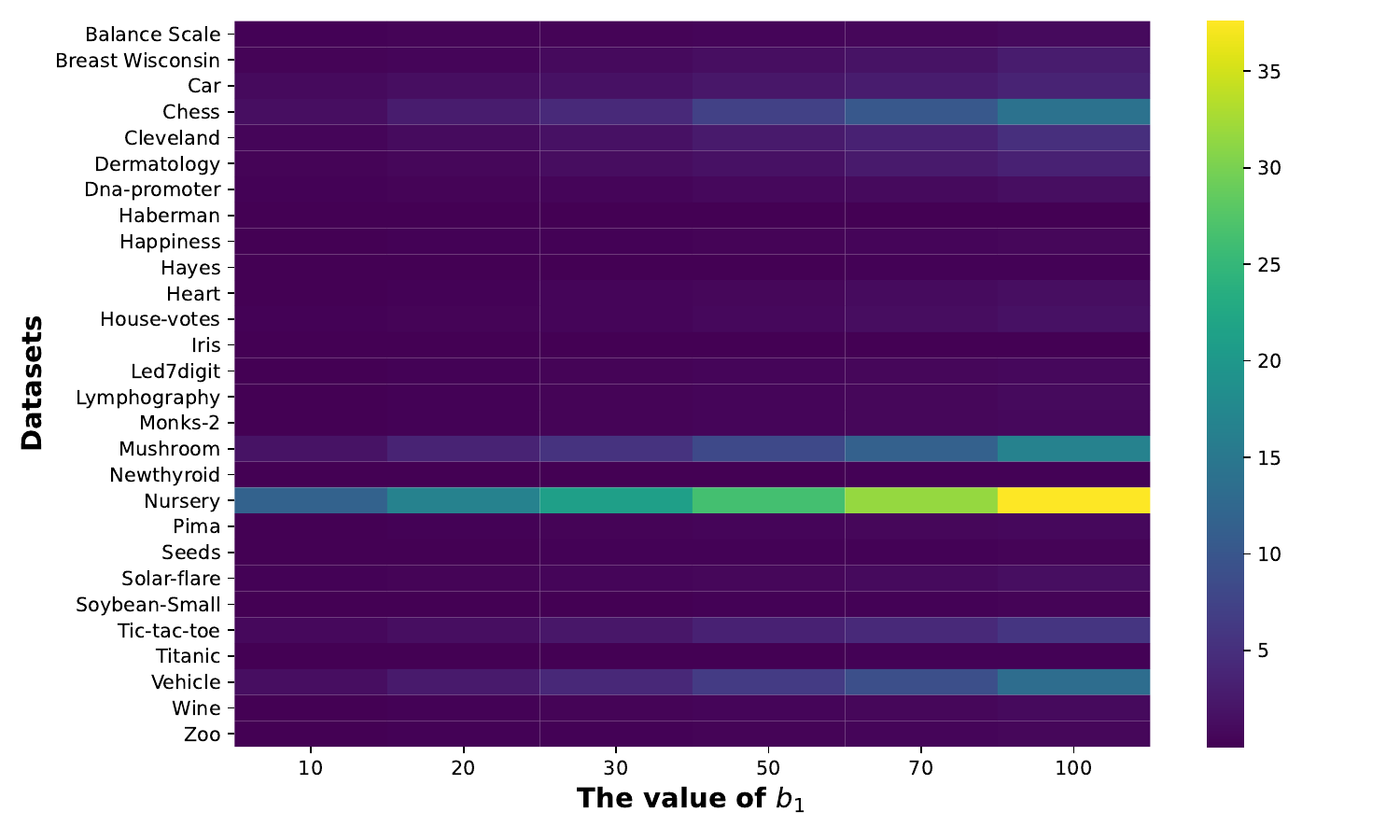}
	\caption{The effect of the value of $b_1$ on COST in terms of the running time.}
	\label{figure:b1Time}
\end{figure}

\begin{figure}[!htbp]
	\centering
	\includegraphics[width=\columnwidth]{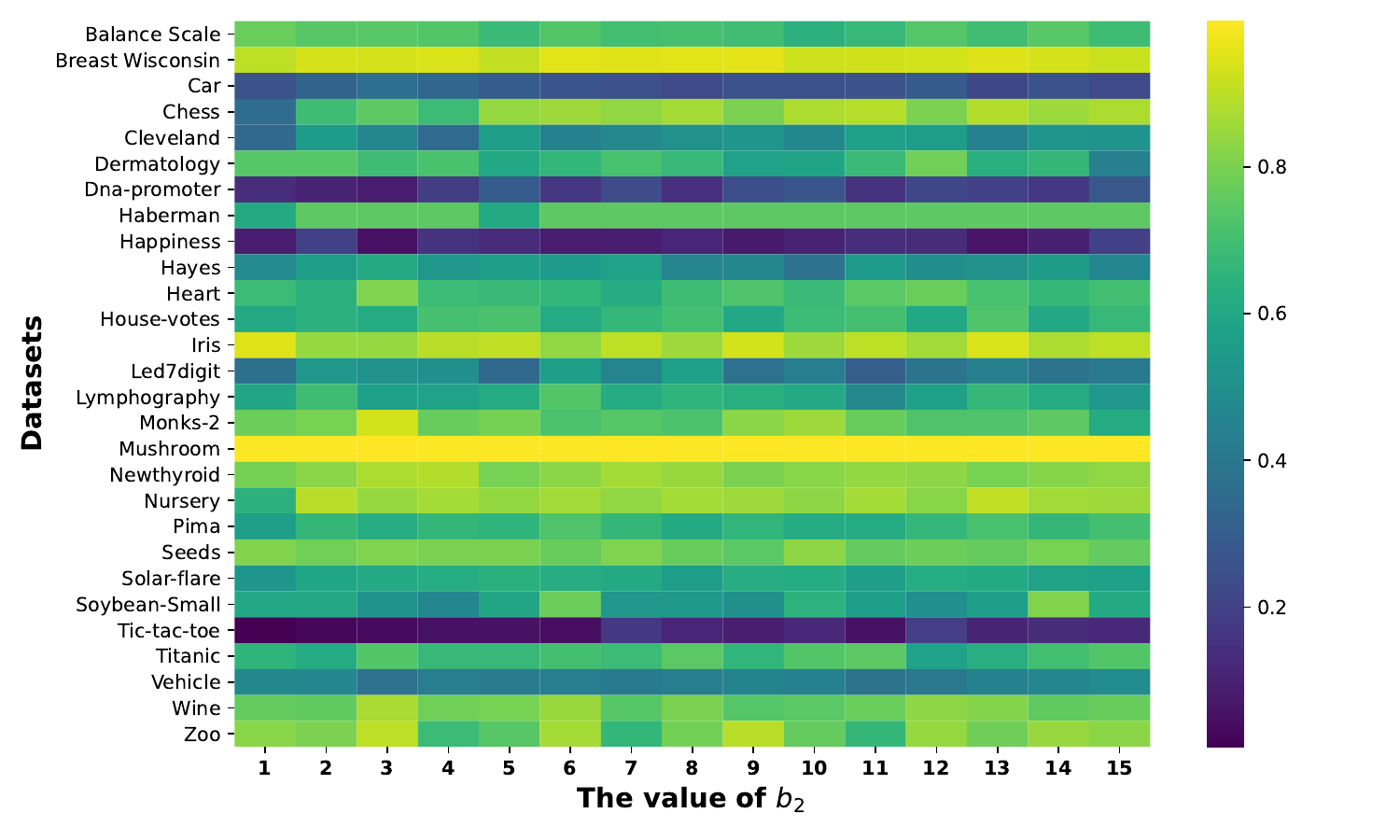}
	\caption{The effect of the value of $b_2$ on COST in terms of the JacAcc score.}
	\label{figure:b2JA}
\end{figure}

\begin{figure}[!htbp]
	\centering
	\includegraphics[width=\columnwidth]{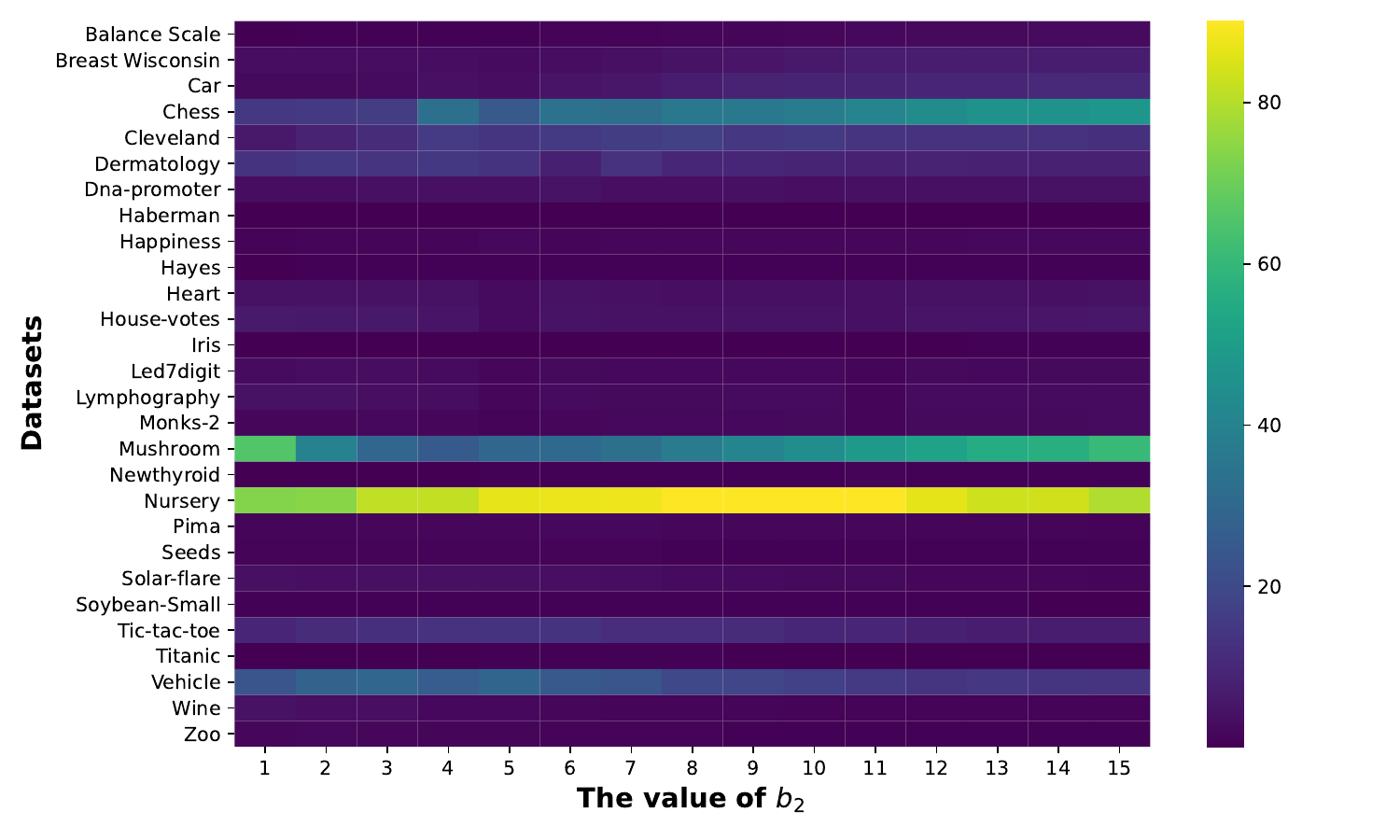}
	\caption{The effect of the value of $b_2$ on COST in terms of the running time.}
	\label{figure:b2Time}
\end{figure}

\begin{figure}[!htbp]
	\centering
	\includegraphics[width=\columnwidth]{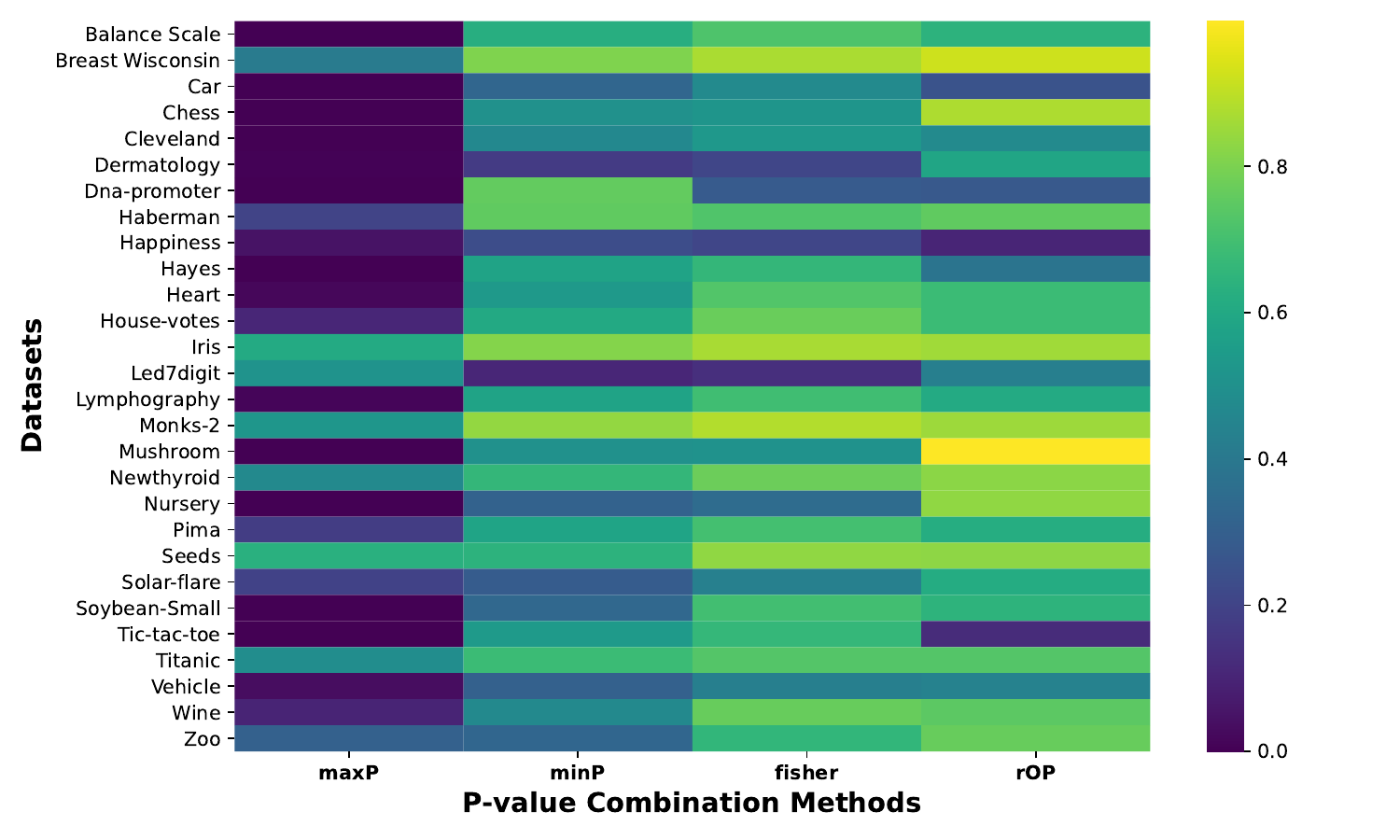}
	\caption{The effect of different \textit{p}-value combination methods on COST in terms of the JacAcc score.}
	\label{figure:combination-JA}
\end{figure}

\section{Conclusion}\label{conclusion}

In this paper, we present a new selective classification method named COST, which is a versatile classifier by combining ideas from different domains. Essentially, it is a testing-based classifier by combining significance testing results from multiple randomly chosen subspaces. The consensus \textit{p}-value for each class can be easily deployed for the purpose of both conformal and selective classification. Extensive empirical studies are conducted to demonstrate its effectiveness in different types of classification tasks. 

\section*{Acknowledgements}
This work has been partially supported by the Natural Science Foundation of China under Grant No. 61972066. We are grateful to the authors of the GPS \cite{zhou2023JMLR} algorithm, who provided invaluable assistance and timely responses to us on testing the GPS algorithm.

\bibliography{sn-bibliography}

\end{document}